\DeclareMathOperator*{\argmin}{arg\,min} 
\newtheorem{theorem}{Theorem}
\newtheorem{proposition}{Proposition}
\newtheorem{extension}{Extention}
\crefname{section}{Sec.}{Secs.}
\Crefname{section}{Section}{Sections}
\Crefname{table}{Table}{Tables}
\crefname{table}{Tab.}{Tabs.}
\begin{document}

\title{Learning Low-Rank Latent Spaces with Simple Deterministic Autoencoder: Theoretical and Empirical Insights}

\author{Alokendu Mazumder\\
Indian Institute of Science\\
Bengaluru, India\\
{\tt\small alokendum@iisc.ac.in}
\and
Tirthajit Baruah\textsuperscript{\textdagger}\\
Indian Institute of Science\\
Bengaluru, India\\
{\tt\small tirthajitb@iisc.ac.in}
\and
Bhartendu Kumar\thanks{Work done during his MTech at IISc Bengaluru} \thanks{denotes equal second place/author contribution}\\
TCS Research\\
Bengaluru, India\\
{\tt\small k.bhartendu@tcs.com}
\and
Rishab Sharma\thanks{Work done during his internship at IISc Bengaluru}\\
Dayananda Sagar College of Engineering\\
Bengaluru, India\\
{\tt\small trishab2001rs@gmail.com}
\and
Vishwajeet Pattanaik\\
Indian Institute of Science\\
Bengaluru, India\\
{\tt\small vishwajeetp@iisc.ac.in}
\and
Punit Rathore\\
Indian Institute of Science\\
Bengaluru, India\\
{\tt\small prathore@iisc.ac.in}
}
\maketitle

\begin{abstract}
    The autoencoder is an unsupervised learning paradigm that aims to create a compact latent representation of data by minimizing the reconstruction loss. However, it tends to overlook the fact that most data (images) are embedded in a lower-dimensional space, which is crucial for effective data representation. To address this limitation, we propose a novel approach called Low-Rank Autoencoder (LoRAE). In LoRAE, we incorporated a low-rank regularizer to adaptively reconstruct a low-dimensional latent space while preserving the basic objective of an autoencoder. 
    This helps embed the data in a lower-dimensional space while preserving important information. It is a simple autoencoder extension that learns low-rank latent space. Theoretically, we establish a tighter error bound for our model. Empirically, our model's superiority shines through various tasks such as image generation and downstream classification. Both theoretical and practical outcomes highlight the importance of acquiring low-dimensional embeddings.

\end{abstract}
\section{Introduction}
\label{sec:intro}
Learning effective representations remains a fundamental challenge in the field of artificial intelligence~\cite{bengio2013representation}. These representations, acquired through self-supervised or unsupervised learning, serve as valuable foundations for various downstream tasks like generation and classification. Among the methods employed for unsupervised representation learning, \textit{autoencoders} (AEs) have gained popularity. AEs allow the extraction of meaningful features from data without the need for labelled examples. The process involves the transformation of data into a lower-dimensional space recognized as the latent space. Subsequently, this transformed data is reconstructed to match its original form, facilitating the acquisition of a meaningful and condensed representation known as a latent vector. To ensure the avoidance of trivial identity mapping, a key aspect is to restrict the information capacity within the autoencoder's internal representation. Over the course of time, various adaptations of autoencoders have been introduced to address this limitation. The Diabolo network~\cite{rumelhart1985learning}, for instance, simplifies the approach by employing a low-dimensional representation. On the other hand, \emph{variational autoencoders} (VAEs)~\cite{kingma2013auto} introduce controlled noise into latent vectors while constraining the distribution's variance. Denoising Autoencoders~\cite{vincent2008extracting} are trained to intentionally generate substantial reconstruction errors by introducing random noise to their inputs. Meanwhile, Sparse Autoencoders~\cite{ranzato2007sparse} enforce a strict sparsity penalty on latent vectors. Quantized Autoencoders like \textit{vector quantized} VAE (VQ-VAE)~\cite{van2017neural} discretize codes into distinct clusters, while Contrasting Autoencoders~\cite{rifai2011contractive} minimize network function curvature beyond the manifold's boundaries. Notably, Low-rank Autoencoders, like \textit{implicit rank minimizing autoencoder} (IRMAE)~\cite{jing2020implicit}, implicitly minimize the rank of the empirical covariance matrix of the latent space by leveraging the dynamics of \textit{stochastic gradient descent} (SGD). In essence, the array of techniques for autoencoder refinement has expanded significantly, leading to a comprehensive toolkit of strategies for overcoming inherent drawbacks and enhancing the efficacy of these latent space learning models.

In this work, we let the network learn the best possible (low) rank/dimensionality of the latent space of an autoencoder by deploying a nuclear norm regularizer that promotes a low-rank solution. We call this model \textit{Low-Rank Autoencoder} (LoRAE). This method consists of inserting a single linear layer between the encoder and decoder of a vanilla autoencoder. This layer is trained along the encoder and decoder networks. The primary objective of this methodology involves not only minimizing the conventional reconstruction loss but also minimizing the nuclear norm of the added linear layer. Due to the presence of nuclear norm minimization in the loss function, the network will now adjust to an effective low-dimensionality of latent space. The nuclear norm regularization of a matrix $\textbf{A}$ is an $l_{1}$ regularization of the singular values of $\textbf{A}$, and it, therefore, promotes a low-rank solution~\cite{recht2010guaranteed}. Similar to various regularization techniques, the additional linear layer remains inactive during inference. Consequently, the architecture of both the encoder and the decoder within the model remains consistent with the original design. In practical application, the linear layer (matrix) is treated as the last layer of the encoder during the inference phase.

We showcase the superiority of LoRAE in learning representations, surpassing the performance of conventional AE, VAE, IRMAE and several state-of-the-art deterministic autoencoders. This validation is carried out using the MNIST and CelebA datasets, employing diverse tasks such as generating samples from noise, interpolation, and classification. We additionally perform experiments to probe the influence of the regularizer on the rank of the latent space\footnote{The rank of the latent space corresponds to the count of non-zero singular values of the empirical covariance matrix of the latent space.}, achieved by varying its penalty parameter. Furthermore, we explore the impact of varying the dimension of the latent space while maintaining a constant penalty term.

The fundamental objective of this paper is to highlight the capacity of low-rank latent spaces acquired through the encoder to yield substantial data representations, thereby elevating both the generative potential and downstream performance. Moreover, we assert that the presence of a nuclear norm penalty in LoRAE offers several robust mathematical assurances. \emph{It's important to note that our paper does not strive to propose a new generative model. Only one of the essences of this work lies in empirically confirming that a low-rank constraint on the latent space of a simple deterministic autoencoder promotes generative capabilities comparable to well-established generative models.}

This paper serves as a proclamation that the utilization of low-rank autoencoders with nuclear norm penalty can yield significant representation outcomes while simultaneously affording the opportunity for the establishment of robust mathematical guarantees.

We summarize our contribution as follows:
\begin{enumerate}
    \item We introduce a novel framework to enrich the latent representation of autoencoders. This is achieved by incorporating a sparse/low-rank regularized projection layer, which dynamically reduces the dimensionality of the latent space.
    \item We provide substantial mathematical underpinning for LoRAE through an analysis of distance prediction error bounds. This analysis sheds light on the reasons driving LoRAE's superior performance in downstream tasks such as classification. Additionally, we offer theoretical guarantees (under certain assumptions) on the convergence of our learning algorithm.
    \item We showcased LoRAE's superior performance by comparing it with several baselines, like \textbf{(i)} a conventional deterministic AE, \textbf{(ii)} a VAE, and \textbf{(iii)} an IRMAE. We also conducted comparisons of LoRAE against several state-of-the-art deterministic autoencoders and various established \emph{generative adversarial network} (GAN) and VAE based generative models
    across a variety of generative tasks.
    
    
\end{enumerate}

\section{Literature Survey}


Training a fully linear network with SGD naturally results in a low-rank solution. This phenomenon can be interpreted as a form of implicit regularization, which has been extensively investigated across diverse learning tasks. Examples include deep matrix factorization~\cite{arora2019implicit, gunasekar2017implicit}, convolutional neural networks~\cite{gunasekar2018implicit}, and logistic regression~\cite{soudry2018implicit}. The implicit regularization offered by gradient descent is believed to be a pivotal element in enhancing the generalizability of deep neural networks. In the domain of deep matrix factorization, 
\emph{Arora et al.}~\cite{arora2019implicit} extended this concept in the case of deep neural nets with solid theoretical and empirical results that a deep linear network can promote low-rank solutions. \emph{Gunasekar et al.}~\cite{gunasekar2018implicit} further extended this work towards \emph{convolutional neural networks} (CNN) and proved that deep linear CNNs can derive low-rank solutions when optimized with gradient descent. Numerous other studies have concentrated on linear scenarios, aiming to empirically and theoretically investigate this phenomenon. The work by \emph{Saxe et al.}~\cite{saxe2019mathematical} demonstrates theoretically that a simple two-layer linear regression model can attain a low-rank solution when optimized using continuous gradient descent. Later \emph{Gidel et al.}~\cite{gidel2019implicit} extended this same concept of low-rank solutions in linear regression problems for a discrete case of gradient descent.
\begin{figure*}[!ht]
    \centering
    \includegraphics[width=0.95\textwidth ]{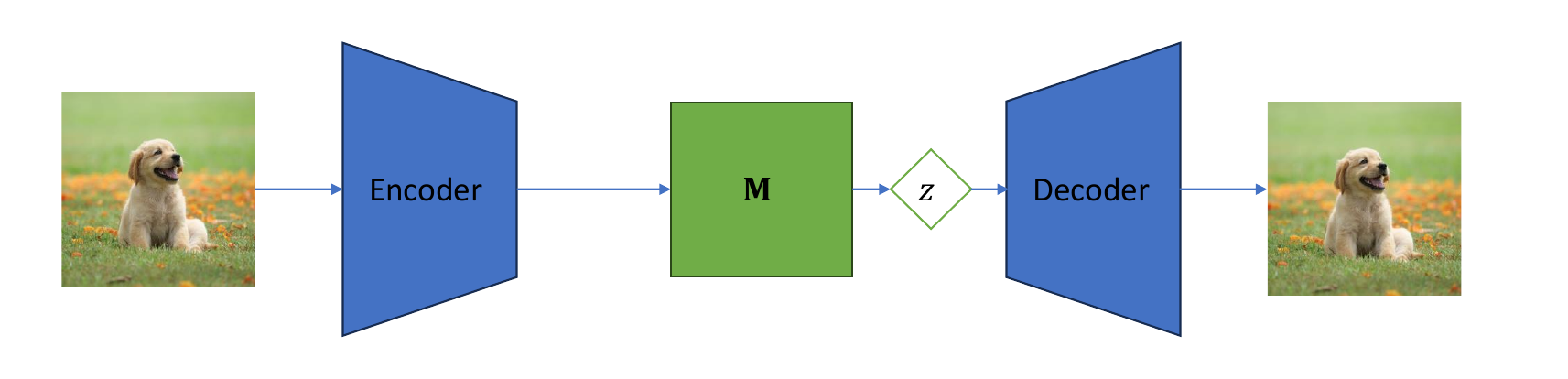}
    \caption{\textbf{Low-Rank Autoencoder (LoRAE)}: We employ a simple deterministic autoencoder coupled with a nuclear norm penalty in its loss function to facilitate the acquisition of a low-rank latent space. During the inference phase, the linear matrix \textbf{M} is condensed into the final layer of the encoder.}
    \label{fig:ERMAE}
\end{figure*}

While the previously mentioned works primarily aimed to comprehend how gradient descent contributes to generalization within established practices, our approach diverges by harnessing this phenomenon to design deep models capable of learning better latent representations of data. Autoencoders are simple yet powerful unsupervised deep models capable of learning latent representations of data. As most complex data (like images) are embedded in some low-dimensional subspace, it is necessary to limit their latent capacity. A significant category of these methods is founded on the concepts of variational autoencoders~\cite{kingma2013auto}, including variations like $\beta$-VAE~\cite{higgins2016beta}. A notable limitation of these methods arises from their probabilistic/generative nature, often resulting in the generation of reconstructed images with reduced clarity (blurry images). Conversely, basic deterministic autoencoders encounter an issue characterized by the existence of \emph{"holes"}\footnote{Discontinuous regions in the latent space of an autoencoder.} within their latent space, primarily arising from the absence of constraints on the distribution of its latent space. Several deterministic autoencoders are proposed to tackle this issue, namely \emph{regularized autoencoders} (RAE)~\cite{ghosh2019variational}, \emph{wasserstein autoecnoder} (WAE)~\cite{tolstikhin2017wasserstein} and \emph{vector quantized VAE} (VQ-VAE)~\cite{van2017neural}. Recently, \emph{Jing et al.}~\cite{jing2020implicit} proposed \emph{implicit rank minimizing autoencoder} (IRMAE) which leverages the \emph{"low-rank"} phenomena~\cite{arora2019implicit} in deep linear networks to learn a better latent representation. It uses a series of linear layers sandwiched between the encoder and decoder of a simple autoencoder. A sequence of linear layers maintains functional and expressive parity with a single linear layer. However, it's essential to note that implicit regularization doesn't manifest for unstructured datasets like random full-rank noise, as pointed out in~\cite{arora2019implicit}. This observation suggests that the occurrence of this phenomenon hinges on the underlying data structure. Moreover, it lacks vital mathematical assurances such as the convergence of its iterations and the underlying principles driving its effectiveness in downstream tasks.
\section{Low-Rank Autoencoder (LoRAE)}
In this section, we unveil our proposed architecture. Let $\textbf{E}: \mathbb{R}^{m \times n \times c} \rightarrow \mathbb{R}^{l}$ and $\textbf{D}: \mathbb{R}^{l} \rightarrow \mathbb{R}^{m \times n \times c}$ denote the encoder and decoder of a simple deterministic autoencoder respectively. Let $z \in \mathbb{R}^l$ denote a vector in its latent space, where $l$ is the dimension of latent space. The latent space is modelled by $\textbf{E}(x)$. Here $x \in \mathbb{R}^{m \times n \times c}$ be an image of size $m \times n$ with $c$ number of channels. A simple (vanilla) deterministic autoencoder optimizes the $L_{2}$ reconstruction loss $\mathcal{L}_{vanilla} = \|\ x - \textbf{D}(\textbf{E}(x)) \|\ _{2}^{2}$ without any constrain over its latent space, hence promoting the presence of holes. 

In LoRAE, we add an additional single linear layer between the encoder and decoder. Let $\textbf{M} \in \mathbb{R}^{l \times l}$ denote a real matrix characterized by a linear layer. The diagram of LoRAE is shown in Figure~\ref{fig:ERMAE}. We explicitly regularize the matrix $\textbf{M}$ with a nuclear norm penalty to encourage learning a low-rank latent space. We train the matrix $\textbf{M}$ jointly with the encoder and decoder. Hence, the final loss of LoRAE can be written as:
\begin{equation}
\label{eq:loss}
    \mathcal{L}(\textbf{E},\textbf{D},\textbf{M}) = \|\ x - \textbf{D}(\textbf{M}(\textbf{E}(x))) \|\ _{2}^{2} + \lambda \|\ \textbf{M} \|\ _{*} 
\end{equation}
where $\|\ \textbf{M} \|\ _{*}$ is the nuclear norm\footnote{$\|\ \textbf{M} \|\ _{*} = \sum_{i=1}^{l}\sigma_{i}(\textbf{M}) = trace(\sqrt{\textbf{M}^{T}\textbf{M}})$} (also known as the trace norm) of matrix $\textbf{M}$. We will now minimize the loss given in Eq.(\ref{eq:loss}) using ADAM~\cite{kingma2014adam} optimizer in batch form. Let $\mathcal{B}$ denote a mini-batch of training data and $|\mathcal{B}|$ be the number of training points in it that are randomly sampled from the training set. We can now write Eq.(\ref{eq:loss}) in mini-batch form as follows:
\begin{equation}
\label{eq:batch_loss}
    \mathcal{L}_{\mathcal{B}}(\textbf{E},\textbf{D},\textbf{M}) = \frac{1}{|\mathcal{B}|}\sum_{x \in \mathcal{B}} \|\ x - \textbf{D}(\textbf{M}(\textbf{E}(x))) \|\ _{2}^{2} + \lambda \|\ \textbf{M} \|\ _{*}
\end{equation}

Let $\theta \in \{\textbf{E}, \textbf{D}, \textbf{M}\}$, $\beta_{1}$ and $\beta_{2}$ $\in [0,1)$ and $\epsilon > 0$. Here, $\textbf{m}_{\theta, t} = \beta_{1}\textbf{m}_{\theta, t-1} + (1 - \beta_{1})\nabla_{\theta}\mathcal{L}_{batch}$, $\textbf{v}_{\theta, t} = \beta_{2}\textbf{v}_{\theta, t-1} + (1 - \beta_{2})(\nabla_{\theta}\mathcal{L}_{batch})^{2}$
and $V_{\theta, t} = diag(\textbf{v}_{\theta, t})$. Now, we can solve for Eq.(\ref{eq:batch_loss}) using Algorithm~\ref{alg:training}.
\begin{algorithm}
\caption{Minimizing Eq. (2) using ADAM}
\label{alg:training}
\KwIn{Training data $\{x\}_{i=1}^{N}$, Batch size $|\mathcal{B}|$ $\in$ $\mathbb{Z}^{+}$, Learning rate: $\alpha \in (0,1]$}
\textbf{Initialization:}\\
\For{$t$ \textbf{from} 1 \textbf{to} $T$:}{
    (i) Randomly sample $|\mathcal{B}|$ number of data points from training set.\\
    (ii) Compute gradient of $\mathcal{L}_{batch}$ with respect to \textbf{E}, \textbf{D} and \textbf{M}.\\
    (iii) Update the parameters using the ADAM update rule:\\
    $\textbf{E}_{t+1} = \textbf{E}_{t} - \alpha(V_{\textbf{E}, t}^{1/2} + diag(\epsilon\mathbb{I}))^{-1}\textbf{m}_{\textbf{E}, t}$ \\
    $\textbf{D}_{t+1} = \textbf{D}_{t} - \alpha(V_{\textbf{D}, t}^{1/2} + diag(\epsilon\mathbb{I}))^{-1}\textbf{m}_{\textbf{D}, t}$ \\
    $\textbf{M}_{t+1} = \textbf{M}_{t} - \alpha(V_{\textbf{M}, t}^{1/2} + diag(\epsilon\mathbb{I}))^{-1}\textbf{m}_{\textbf{M}, t}$ \\
    
}
\textbf{End:}\\
\textbf{Output:} The learned $\textbf{E}^{*}$, $\textbf{D}^{*}$ and $\textbf{M}^{*}$
\end{algorithm}

During the training phase, the use of the nuclear norm penalty prompts the latent variables to occupy an even lower-dimensional subspace. As a result, this process reduces the rank of the empirical covariance matrix of the latent space. To enhance the impact of this regularization, one can amplify its effect by adjusting the penalty term $\lambda$ to a higher value.

During inference time, the linear layer is \emph{"collapsed"} into the encoder. Hence, we can directly use this new encoder for generative tasks. We can also use the encoder only for downstream tasks.



\section{Experiments}
In this section, we conduct an empirical assessment of LoRAE in comparison to baselines (AE, VAE, IRMAE) and state-of-the-art models (WAE, RAE). In Section~\ref{sec:dim_red}, we highlight that LoRAE occupies a relatively smaller latent space compared to a standard AE and achieves smooth dimensionality reduction, unlike IRMAE. Moving to Section~\ref{sec:gen}, we provide empirical evidence of LoRAE's ability to generate images of superior quality compared to the fundamental vanilla AE. This superiority is attributed to its low-rank latent space. Additionally, the model demonstrates enhanced quantitative performance when compared against simple AE, VAE, and IRMAE. Next, in Section~\ref{sec:classification}, we leverage the encoder component of our trained model to effectively classify images within the MNIST dataset. Lastly, we study the effect of two crucial hyper-parameters of our model in Section~\ref{sec:addl_exp}.

\subsection{Dimensionality Reduction}\label{sec:dim_red}
In Figure~\ref{fig:both_figures}, we present the dimensionality reduction achieved by LoRAE and IRMAE in the latent space. LoRAE's gradual and smooth decay in the singular value plot stands in contrast to IRMAE's sharp decline. This discrepancy contributes to IRMAE occupying an even smaller-dimensional latent space than LoRAE. This distinction arises because nuclear norm minimization penalizes larger singular values more. Conversely, in the IRMAE plot, a rapid transition toward zero is observed compared to ours, potentially stemming from the fact that gradient descent dynamics penalize smaller singular values more than larger ones, driving them toward zero. LoRAE employs a considerably smaller latent space compared to a basic deterministic AE.

In Section~\ref{sec:effect_of_lambda}, we showcased that there exists a \emph{sweet} point of latent space rank which signifies a balance wherein the model's generative capacity is empirically maximized. Hence, latent spaces with very low ranks/low dimensions are also unfavourable as they lead to a deterioration in generative performance.
\subsection{Generative Tasks}
\label{sec:gen}
The effectiveness and quality of a learned latent space can be assessed by generating images from it and examining the smoothness during transitions from one point to another within it. When the generated images demonstrate a high level of quality, it indicates the latent space's successful learning. We trained our model using the MNIST and CelebA datasets and conducted a comparison with a basic deterministic AE, IRMAE as well as a VAE. We set the latent space dimensions to 128 for MNIST and CelebA, training for 50 and 100 epochs respectively (Additional hyperparameter details are available in our \emph{supplementary material}). In this section, we conducted two experiments: \textbf{(i)} linearly interpolating between two data points, and \textbf{(ii)} generating images from the latent space using \emph{Gaussian Mixture Model} (GMM) and \emph{Multivariate Gaussian} (MVG) noise fitting. Additionally, we quantitatively evaluated our model's generative capability using the \emph{Fréchet Inception Distance} (FID)~\cite{heusel2017gans, parmar2021cleanfid} score. Across the tasks mentioned above, we compare our model qualitatively (see Figure~\ref{fig:triangular_layout}, Table~\ref{tab:GMM_generation} and Table~\ref{lttab:MVG_generation}) and quantitatively (see Table~\ref{tab:99}). with a simple AE, VAE and IRMAE. All models reported in Table~\ref{tab:99} shared the exact same architecture.

\begin{figure}[] 
    \centering
    \begin{subfigure}{0.23\textwidth} 
        \centering
        \includegraphics[width=\linewidth]{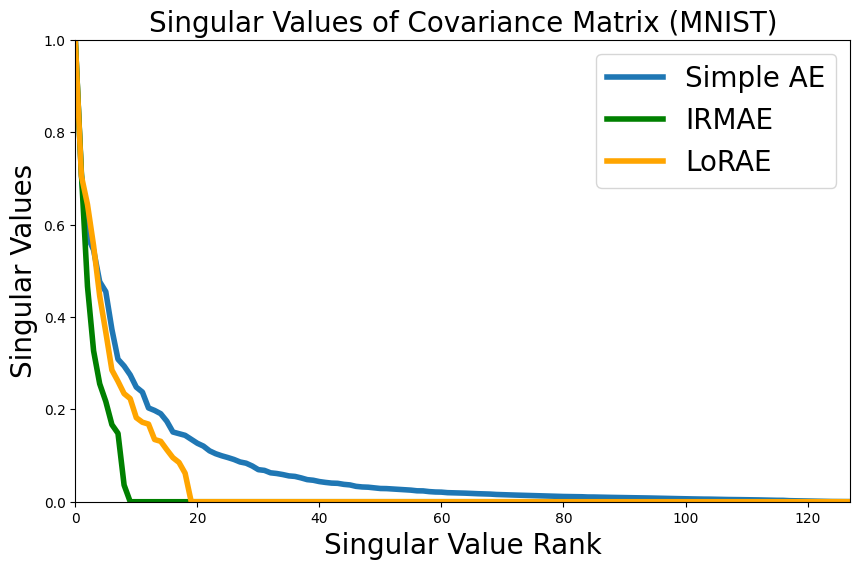}      
    \end{subfigure}%
    \begin{subfigure}{0.23\textwidth}
        \centering
        \includegraphics[width=\linewidth]{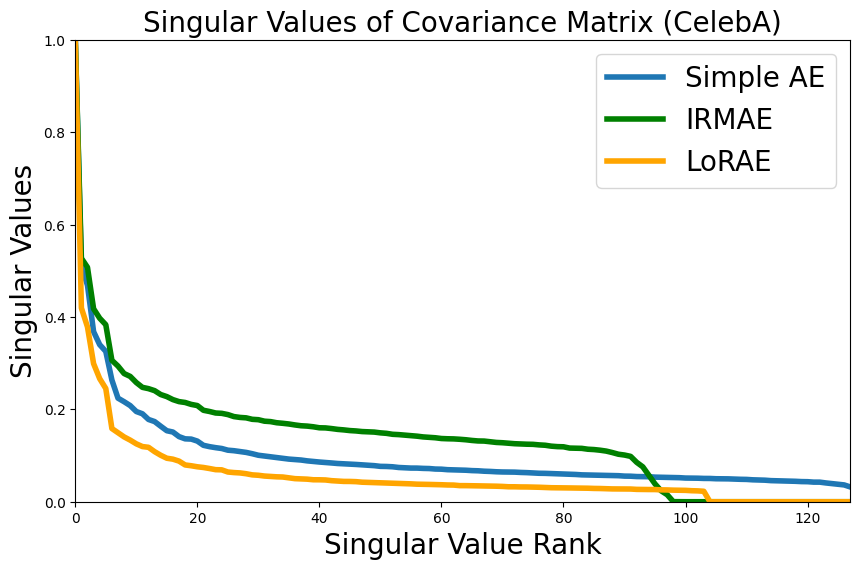}     
    \end{subfigure}  
\caption{\textbf{Singular value plot of the empirical covariance matrix of the latent space}: A comparison between a basic AE, IRMAE and LoRAE on MNIST and CelebA datasets. The empirical covariance matrix is computed from the test set of both datasets.}
\label{fig:both_figures}
\end{figure}

\begin{figure}[]
    \centering
    \begin{subfigure}{0.45\columnwidth}
        \includegraphics[width=\linewidth]{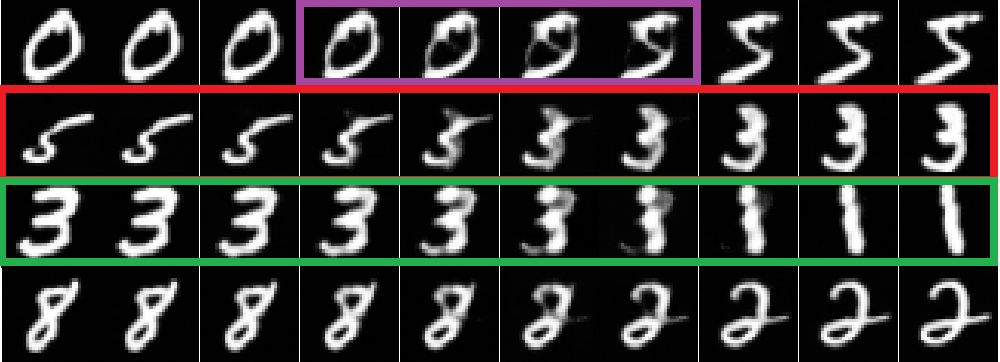}
        \caption{Simple AE}
    \end{subfigure}
    \hfill
    \begin{subfigure}{0.445\columnwidth}
        \includegraphics[width=\linewidth]{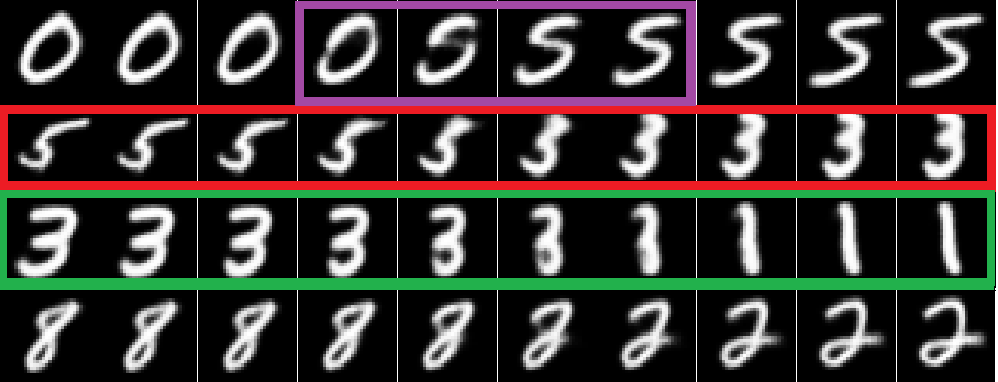}
        \caption{VAE}
    \end{subfigure}
    
    \begin{subfigure}{0.45\columnwidth}
        \includegraphics[width=\linewidth]{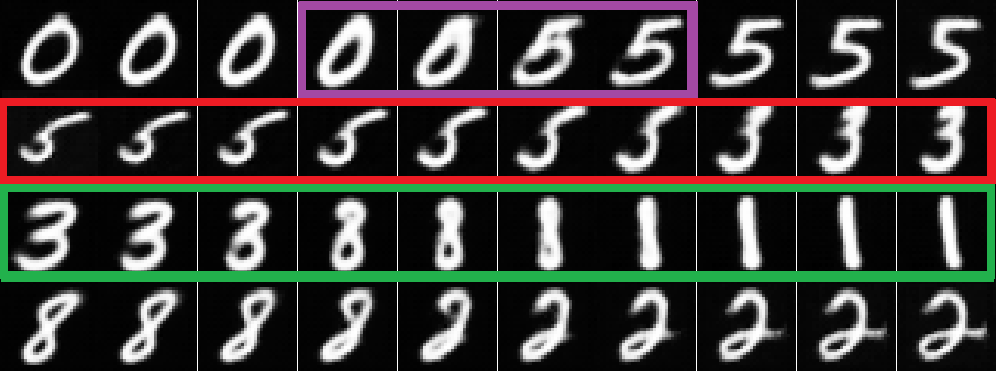}
        \caption{IRMAE}
    \end{subfigure}
    \hfill
    \begin{subfigure}{0.45\columnwidth}
        \includegraphics[width=\linewidth]{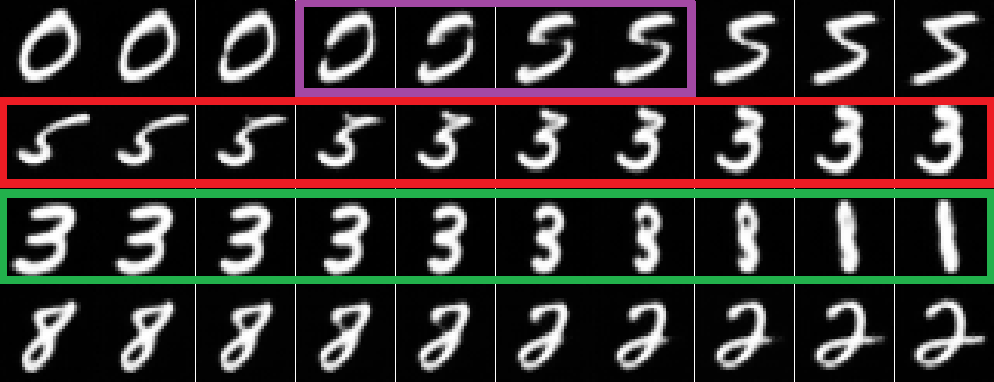}
        \caption{Ours}
    \end{subfigure}
    
    \caption{\textbf{Interpolation}: Employing linear interpolation among data points on the MNIST dataset, we observe three instances in a left-to-right top-to-bottom sequence: (a) Basic AE, (b) VAE, (c) IRMAE and (d) LoRAE.}
    \label{fig:triangular_layout}
\end{figure}

\subsubsection{Interpolation Between Datapoints}
\begin{table*}[htbp]
\centering
\caption{\textbf{Generation from GMM}: MNIST and CelebA images generated from GMM with 4 and 10 clusters respectively. We show the images generated by a simple autoencoder, VAE, and LoRAE (from left to right).}
\label{tab:GMM_generation}
\begin{tabular}{c c c c c}

& \textit{AE} & \textit{VAE} & \textit{IRMAE} & \textbf{Ours} \\

\textit{MNIST} &
\raisebox{-0.5\height}{\includegraphics[scale=0.27]{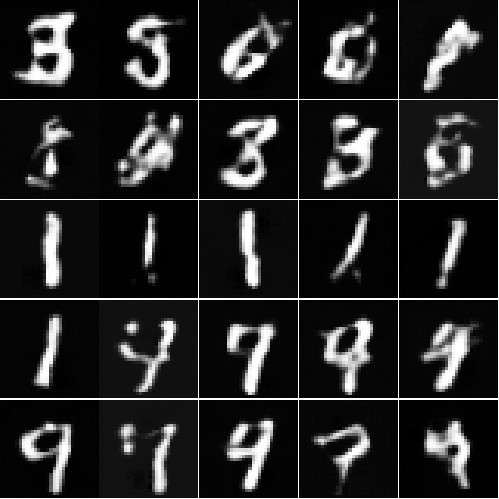}} &
\raisebox{-0.5\height}{\includegraphics[scale=0.27]{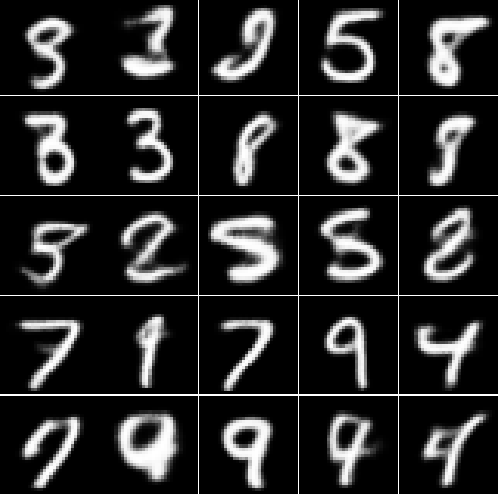}} &
\raisebox{-0.5\height}{\includegraphics[scale=0.27]{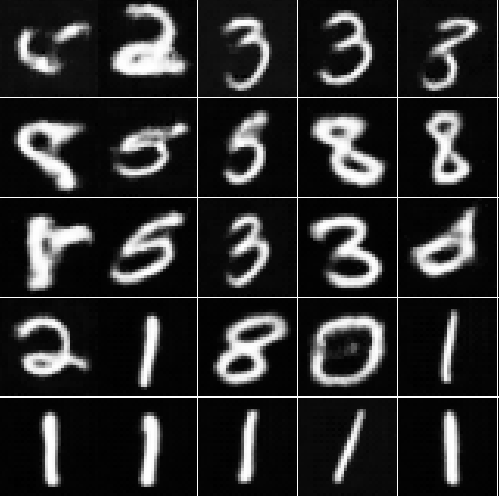}} &
\raisebox{-0.5\height}{\includegraphics[scale=0.27]{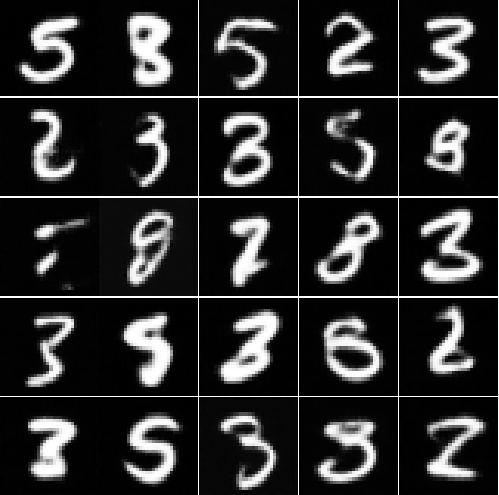}} \\
\addlinespace[0.65cm]

\textit{CelebA} &
\raisebox{-0.5\height}{\includegraphics[scale=0.27]{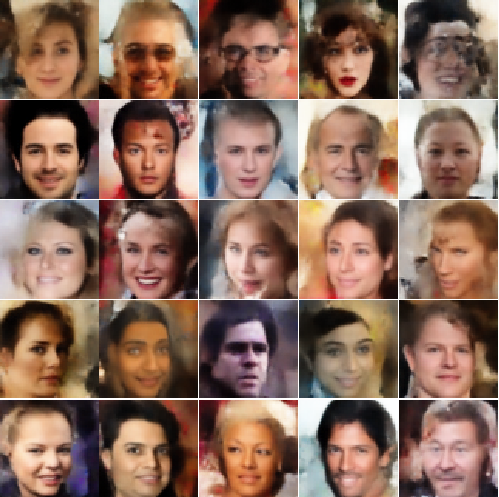}} &
\raisebox{-0.5\height}{\includegraphics[scale=0.27]{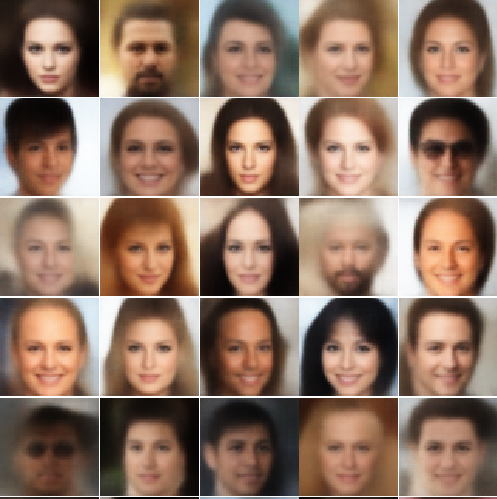}} &
\raisebox{-0.5\height}{\includegraphics[scale=0.27]{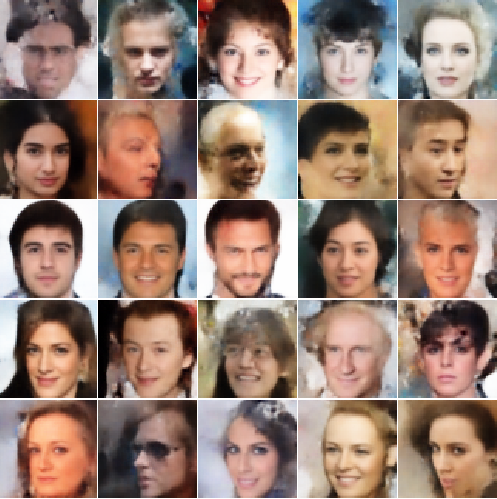}} &
\raisebox{-0.5\height}{\includegraphics[scale=0.27]{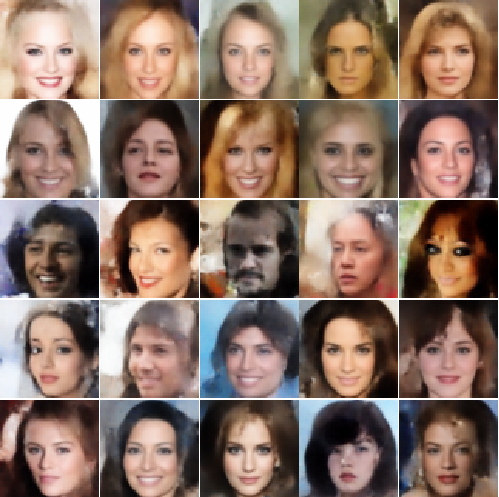}} \\
\end{tabular}
\end{table*}

\begin{table*}[htbp]
\centering
\caption{\textbf{Generation from MVG}: MNIST and CelebA images generated from MVG noise. We show the images generated by a simple autoencoder, VAE, and LoRAE (from left to right).}
\label{lttab:MVG_generation}
\begin{tabular}{c c c c c}

& \textit{AE} & \textit{VAE} & \textit{IRMAE} & \textbf{Ours} \\

\textit{MNIST} &
\raisebox{-0.5\height}{\includegraphics[scale=0.27]{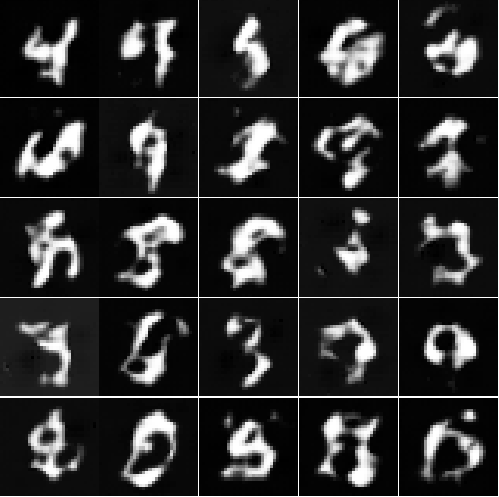}} &
\raisebox{-0.5\height}{\includegraphics[scale=0.27]{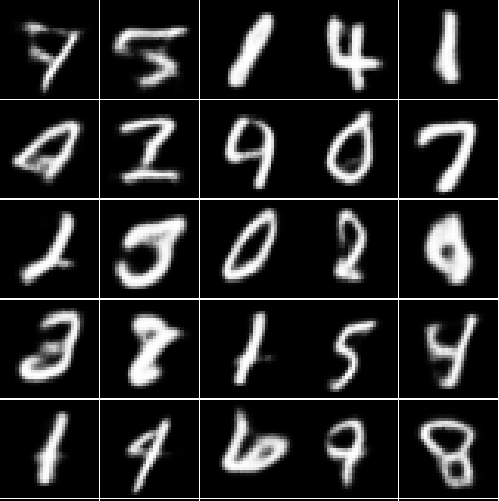}} &
\raisebox{-0.5\height}{\includegraphics[scale=0.27]{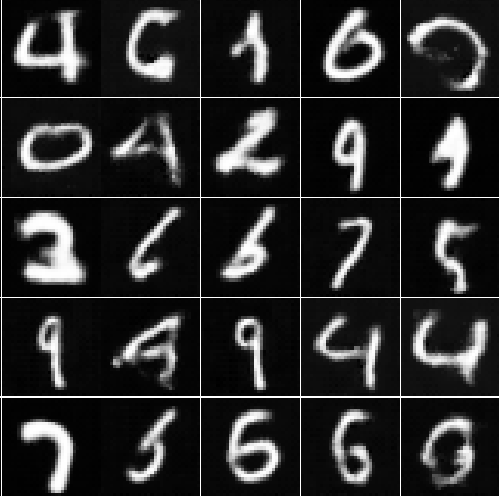}} &
\raisebox{-0.5\height}{\includegraphics[scale=0.25]{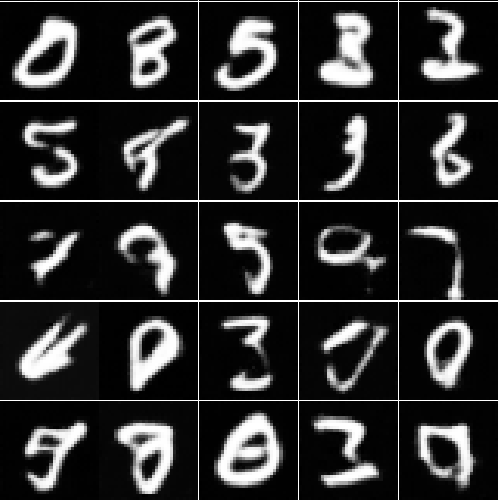}} \\
\addlinespace[0.65cm]

\textit{CelebA} &
\raisebox{-0.5\height}{\includegraphics[scale=0.27]{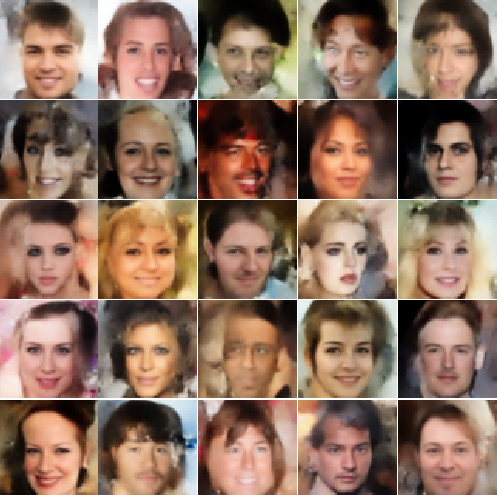}} &
\raisebox{-0.5\height}{\includegraphics[scale=0.27]{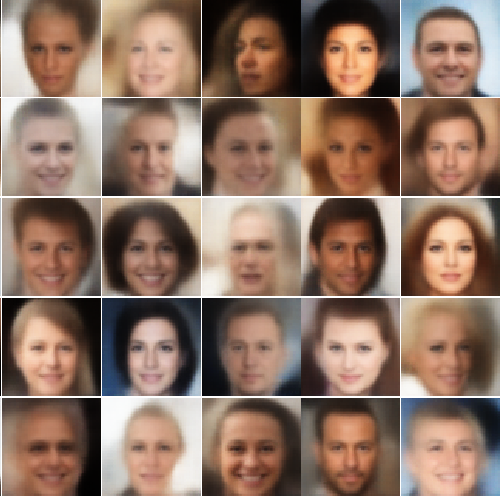}} &
\raisebox{-0.5\height}{\includegraphics[scale=0.27]{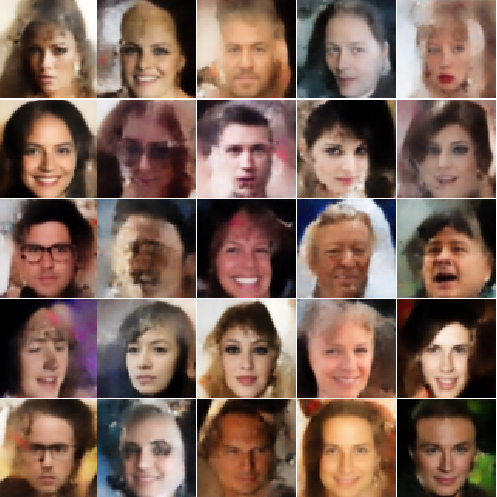}} &
\raisebox{-0.5\height}{\includegraphics[scale=0.27]{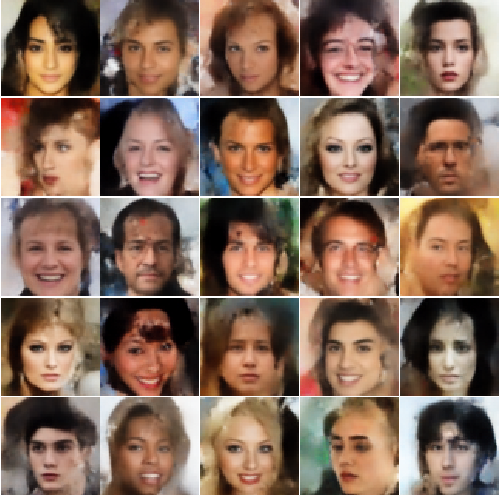}} \\
\end{tabular}
\end{table*}
We conduct linear interpolation on the latent variables of two randomly selected images from the test set. The results depicted in Figure~\ref{fig:triangular_layout} show that LoRAE significantly outperforms the conventional AE. Its generated quality aligns with that of a VAE and surpasses IRMAE. The progression from digit 5 to 3 showcases a high degree of smoothness in LoRAE, setting it apart from both VAE and IRMAE (highlighted by the red box). VAE's probabilistic nature introduces a slightly blurred transition for the same sequence. Conversely, transitioning from 3 to 1, IRMAE generates a digit resembling 8 (marked by the green box). Likewise, in the transition from 0 to 5, IRMAE produces a somewhat blurry transformation (indicated by the violet box in Figure~\ref{fig:triangular_layout} (c)). Remarkably, LoRAE achieves seamless transitions between individual digits, thereby effectively learning a better and refined latent space.
\subsubsection{Generating Images from Noise}
\label{sec:generation_images}

Deterministic autoencoders lack generative capability. Here, to demonstrate the generative capability of LoRAE, we fit a Gaussian noise. We demonstrate the capability of LoRAE to generate high-quality images from Gaussian noise. In particular, we accomplish this by fitting \textbf{(i)} a \emph{Gaussian Mixture Model} (GMM) and \textbf{(ii)} a \emph{Multivariate Gaussian (MVG)} (which we denote as $\mathcal{N}$) distribution to its latent space. After fitting the noise distribution, we proceed to sample from it and feed these samples through the decoder to generate images. We take the number of clusters 4 and 10 for MNIST and CelebA respectively in the case of GMM to avoid overfitting. We quantitatively evaluate the generating capacity of each model using the FID score.

Analyzing the outcomes presented in Table~\ref{tab:GMM_generation} and Table~\ref{lttab:MVG_generation}, a clear trend emerges: our model excels in generating images with higher visual quality when compared to the simple AE and IRMAE models. Furthermore, its image generation quality stands on par with that of the VAE. Differing from VAE, which often produces images with blurred backgrounds due to its probabilistic nature, our model, being deterministic, avoids generating images with such blurriness. This distinction results in clearer and more defined images produced by LoRAE. These findings are supported by the data showcased in Table~\ref{tab:99} where our model achieves the best FID score among the three, highlighting the pivotal role of low-rank latent space achieved through explicit regularization using nuclear norm penalty. This strategic enhancement of generative capabilities effectively surpasses even the performance of a conventional VAE.

\begin{table}[h]
    \centering
    \caption{FID scores obtained between the generated samples from MVG and GMM noise and real test samples for all models on MNIST and CelebA datasets.}
    \label{tab:99}
    \begin{tabular}{c c c c c}
        \hline
         & \multicolumn{2}{c}{\textbf{MNIST}} & \multicolumn{2}{c}{\textbf{CelebA}} \\
        \cline{2-5}
         & $\mathcal{N}$ & GMM & $\mathcal{N}$ & GMM \\
        \hline
        \hline
        AE & 103.08 & 68.97 & 68.13 & 59.43 \\
        \hline
        VAE & 21.01 & 18.86 & 61.87 & 53.63 \\
        \hline
        IRMAE & 26.58 & 22.31 & 58.98 & 48.56\\
        \hline
        \textbf{LoRAE} & \textbf{19.50} & \textbf{11.09} & \textbf{56.29} & \textbf{45.43}\\
        \hline
    \end{tabular}
\end{table}

\vspace{-14pt}
\subsubsection{Comparing with other SOTA Models}
We perform a comparative analysis of LoRAE against state-of-the-art deterministic autoencoders such as \emph{wasserstein's autoencoder} (WAE)~\cite{tolstikhin2017wasserstein} and \emph{regularized autoencoders}
(RAE)~\cite{ghosh2019variational}, in terms of FID scores (we retrained these models in our system using the exact parameters from their original paper.). Based on the data presented in Table~\ref{tab:4}, it can be observed that LoRAE attains the top rank in all cases, except for the MVG ($\mathcal{N}$) case of CelebA, where it shares the second position with RAE. 

It is evident that LoRAE exhibits enhanced performance on the MNIST dataset and remains competitively aligned with the performance on the CelebA dataset.

\begin{table}[!htbp]
    \centering
    \caption{Comparision of LoRAE against modern state-of-the-art deterministic autoencoders (FID Score is used for comparison).}
    \label{tab:4}
    \begin{tabular}{c c c c c}
        \hline
         & \multicolumn{2}{c}{\textbf{MNIST}} & \multicolumn{2}{c}{\textbf{CelebA}} \\
        \cline{2-5}
         & $\mathcal{N}$ & GMM & $\mathcal{N}$ & GMM \\
        \hline
        \hline
        WAE & 21.04 & 11.32 & 57.6 & 45.91 \\
        \hline
        RAE & 22.12 & 11.54 & \textbf{50.31} & 46.05 \\
        \hline
        \textbf{LoRAE} & \textbf{19.50} & \textbf{11.09} & 56.29 & \textbf{45.43}\\
        \hline
    \end{tabular}
\end{table}

Moreover, we broaden our comparative analysis to include various other generative models on the CelebA dataset. This encompasses fundamental models like the \emph{generative adversarial network} (GAN)~\cite{goodfellow2014generative}, \emph{least squares GAN} (LS-GAN)~\cite{mao2017least}, \emph{non-saturating GAN} (NS-GAN)~]\cite{fedus2017many}, along with the combined architecture of VAE and Flow models (VAE+Flow)~\cite{rezende2015variational}. 

The reported values for all GAN-based models in this Table~\ref{tab:5} correspond to the optimal FID results from an extensive hyperparameter search, conducted separately for each dataset, as detailed by \emph{Lucic et al.} in ~\cite{lucic2018gans}. Instances involving severe mode collapse were excluded to prevent inflation of these FID scores.
\begin{table}[!htbp]
\centering
\caption{Comparison of LoRAE against well-known generative models on CelebA dataset. (FID Score is used for comparison)}
\label{tab:5}
\begin{tabular}{c c c c c}
\hline
GAN & LS-GAN & NS-GAN & VAE + Flow & \textbf{LoRAE} \\
\hline
\hline
65.2 & \textbf{54.1} & 57.3 & 65.7 & 56.3 \\
\hline
\end{tabular}
\end{table}
Our analysis clearly demonstrates that LoRAE achieves FID scores that align closely with those attained by other established generative models. 

To conclude Section~\ref{sec:gen}, we offer empirical evidence that the incorporation of a nuclear norm penalty term into a simple AE framework, which enforces an explicit low-rank constraint on the latent space, leads to a substantial enhancement in the model's generative capabilities.
\subsection{Downstream Classification Task}
\label{sec:classification}
Latent variables play a crucial role in downstream tasks as they encapsulate the fundamental underlying structure of the data distribution~\cite{he2020momentum, misra2020self, chen2020simple}. The potential of these self-supervised learning methods to outperform purely-supervised models is particularly promising. We engage in downstream classification using the MNIST dataset. For each method - the simple AE, VAE, and LoRAE - we train a \emph{multi-layer perceptron} (MLP) layer on top of the trained encoder (Parameters for classification are given in our \emph{supplementary material}). The fine-tuning of the pre-trained encoder is excluded, except for instance involving the purely supervised version of the basic AE (tagged as \emph{supervised} in Table~\ref{tab:classification_table}). As shown in Table~\ref{tab:classification_table}, representation learned by LoRAE showcases significantly superior classification accuracy in comparison to simple AE and VAE. Hence, it is evident that LoRAE surpasses simple AE, VAE and the supervised version in the low-data regime while maintaining a similar level of performance when operating on full-length datasets. 
\begin{table}[htbp]
\centering
\caption{Classification accuracy (in \%) obtained from a simple AE, VAE and LoRAE on MNIST dataset.}
\label{tab:classification_table}
\begin{tabular}{c c c c c c c}
\hline
Size of\\training set & 10 & 100 & 1000 & 10,000 & 60,000 \\
\hline
\hline
AE &
41.0 & 68.16 & 89.8 & 96.5 & 98.1 \\

\hline
VAE &
41.5 & 77.47 & 94.0 & \textbf{98.5} & 98.9 \\
\hline

\textbf{LoRAE} &
\textbf{46.6} & \textbf{89.02} & \textbf{95.4} & 97.9 & 98.6\\
\hline
\hline
Supervised &
37.8 & 73.59 & 94.2 & 98.3 & \textbf{99.2}\\
\hline
\end{tabular}
\end{table}

We provide strong theoretical support for its improved performance in downstream classification tasks through Theorem 2 in Section~\ref{sec:downstream}.

\subsection{Effect of Hyperparameters in LoRAE}
\label{sec:addl_exp}
In this section, we investigate two crucial hyperparameters: \textbf{(i)} penalty parameter $\lambda$ and \textbf{(ii)} encoder output dimension (latent dimension). We seek insights into their impact on LoRAE's generative capacity.

\subsubsection{Effect of Penalty Parameter $\lambda$ on FID}
\label{sec:effect_of_lambda}
Elevating the penalty term results in a more pronounced regularization effect, whereas reducing it lessens this impact. We investigate the effect of the regularization penalty parameter on FID scores. Intuitively, the rank of the latent space appears to have an inverse relationship with the penalty parameter. A more pronounced emphasis on nuclear norm minimization is likely to lead to a reduction in the rank of the latent space (more discussion in Section~\ref{sec:rank_latent_space}). 

 
\begin{table}[!htbp]
\centering
\caption{Effect of varying penalty parameter for MNIST dataset.}
\label{tab:lambda_vary}
\begin{tabular}{c c c c c c}
\hline
$\lambda$ & $10^{-1}$ & $10^{-2}$ & $10^{-3}$ & $10^{-4}$ & $10^{-5}$\\
\hline
\hline
Rank of\\latent space & \multicolumn{1}{c}{$7$} & \multicolumn{1}{c}{$8$} & \multicolumn{1}{c}{$\textbf{18}$} & \multicolumn{1}{c}{$40$} & \multicolumn{1}{c}{$65$} \\
\hline
FID & 23.65 & 22.18 & \textbf{11.09} & 25.60 & 40.75 \\
\hline
\end{tabular}
\end{table}

In Table~\ref{tab:lambda_vary}, it becomes evident that extremely low and high-dimensional latent spaces result in suboptimal generative capabilities for LoRAE. This highlights the significance of identifying an optimal penalty parameter ($\lambda$) that yields the most suitable latent space for generative tasks, as indicated by the best FID. Hence, the hyperparameter $(\lambda)$ assumes a critical role and requires optimization in practical applications.


\subsubsection{Comparison with Simple AE on Various Latent Dimension}
Autoencoders that exhibit diverse latent dimensions or varying prior configurations inherently necessitate a balance between obtaining valuable representations. In this context, we explore the impact of latent dimensionality on FID scores for both LoRAE and a simple AE. 
\begin{table}[!htbp]
\centering
\caption{Effect of varying latent dimensions on LoRAE compared to a simple AE using CelebA. (FID Score is used for comparison)}
\label{tab:output_dim_variation}
\begin{tabular}{c c c c c c}
\hline
Latent \\Dimension & 64 & 128 & 256 & 512 & 1024 \\
\hline
\hline
AE & \textbf{69.08} & 68.13 & 66.94 & 91.42 & 107.98 \\
\hline
\textbf{LoRAE} & 71.58 & \textbf{56.29} & \textbf{62.42} & \textbf{62.93} & \textbf{63.89} \\
\hline
\end{tabular}
\end{table}
From Table~\ref{tab:output_dim_variation}, it becomes apparent that LoRAE, when endowed with larger latent dimensions, demonstrates enhanced performance compared to the optimally dimensioned AE. Additionally, it is noticeable that as the dimensionality increases, the performance of LoRAE tends to plateau.
\section{Theoretical Analysis}
In this section, we offer an exhaustive theoretical examination of our proposed model. Our focus revolves around two key aspects: firstly, an exploration of the rate of convergence exhibited by our learning algorithm, and secondly, an exploration of the lower bound on the min-max distance ratio. Furthermore, we also establish a proof demonstrating the inverse correlation between the rank of LoRAE's latent space and the penalty parameter ($\lambda$). Collectively, these analyses highlight and confirm the effectiveness of LoRAE.. All proofs are given in our \textit{supplementary materials}.
\subsection{Convergence Analysis}
In this section, we are going to present the convergence analysis for the loss function as described in Eq.(\ref{eq:loss}), considering the ADAM iterations outlined in Algorithm~\ref{alg:training}. Before delving into the analysis, we will outline the assumptions that have been considered. \textbf{(i)} The loss function in Eq.(\ref{eq:loss}) is $K - Lipchitz$, \textbf{(ii)} it has a $\sigma > 0$ bounded gradient, i.e $\|\nabla\mathcal{L}(\textbf{E}, \textbf{D}, \textbf{M})\|_{2} < \sigma < \infty$, \textbf{(iii)} it has a well-defined minima, i.e $\mathcal{L}(\textbf{E}^{*}, \textbf{D}^{*}, \textbf{M}^{*}) \leq \mathcal{L}(\textbf{E}, \textbf{D}, \textbf{M})$, where $\textbf{E}^{*}, \textbf{D}^{*}, \textbf{M}^{*} =
\argmin_{\textbf{E}, \textbf{D}, \textbf{M}} \mathcal{L}(\textbf{E}, \textbf{D}, \textbf{M})$ and \textbf{(iv)} we prove convergence for deterministic version of ADAM.

\begin{theorem}
Let the loss function $\mathcal{L}(\textbf{E}, \textbf{D}, \textbf{M})$ be $K-$Lipchitz and let $\gamma < \infty$ be an upper bound on the norm of the gradient of $\mathcal{L}$. Then the following holds for the deterministic version (when batch size = total dataset) of Algorithm (1):

For any $\sigma > 0$ if we let $\alpha = \sqrt{2(\mathcal{L}(\textbf{E}_{0}, \textbf{D}_{0}, \textbf{M}_{0}) - \mathcal{L}(\textbf{E}^{*}, \textbf{D}^{*}, \textbf{M}^{*}))/K\delta^{2}T}$, then there exists a natural number $T(\sigma,\delta)$ (depends on $\sigma$ and $\delta$) such that $\|\mathcal{L}(\textbf{E}_{t}, \textbf{D}_{t}, \textbf{M}_{t})\|_{2} \leq \sigma$ for some $t \geq T(\sigma,\delta)$, where $\delta^{2} = \frac{\gamma^{2}}{\epsilon^{2}}$.
\end{theorem}
With our analysis, we showed that our algorithm attains convergence to a stationary point with rate $\mathcal{O}(\frac{1}{T^{1/4}})$ when proper learning rate $\alpha > 0$ is set.

Therefore, we are in a position to confidently assert that the integration of the supplementary linear layer \textbf{M} and the utilization of nuclear norm regularization indeed ensure the guaranteed convergence of our learning algorithm.
\subsection{Lower Bound on min-max Distance Ratio}
\label{sec:downstream}
Considering a given set of independent and identically distributed (i.i.d) random variables, denoted as $x, x_{1}, x_{2}, \dots, x_{N}$, where each of these variables lies in the Euclidean space $\mathbb{R}^{d}$ (with $x$ and $x_i$ applicable for all $i \in \{1, 2, \dots, N\}$), the corresponding embeddings $f^{*}(x)$ and $f^{*}(x_{1 \leq i \leq N})$ also demonstrate an i.i.d nature, irrespective of the method used for their learning as highlighted by \emph{Dasgupta et al.} in \cite{dasgupta2020improving}. \emph{Beyer et al.} in \cite{beyer1999nearest} postulated that, in high-dimensional spaces, the minimum distance $\left(d_{min}^{f^{*}} = \underset{1 \leq j \leq N}{\min} d^{f^{*}}(x,x_{j})\right)$\footnote{Here, $d^{f^{*}}(.,.)$ is some distance metric/measure.} and the maximum distance $\left(d_{max}^{f^{*}} = \underset{1 \leq j \leq N}{\max} d^{f^{*}}(x,x_{j})\right)$ tend to be similar when considering $f(x)$ and $f(x_{1 \leq i \leq N})$ as i.i.d. Hence, the notion of similarity and dissimilarity with respect to the distance function between data points in embedding space is completely lost in such cases. In autoencoders, to avoid learning high-dimensional latent features, one may directly reduce the dimensionality of the output layer of the encoder, but this will cause dimensional collapse. In our model, we consider the matrix $\textbf{M}$ to transform the encoder embedding result $\textbf{E}(x)$ into the latent vector $\textbf{M}(\textbf{E}(x))$. When we further introduce the low-rank constraint for \textbf{M}, we can obtain a low-dimensional latent space for a simple autoencoder.

Given that our approach explicitly imposes a constraint on the dimensionality of the learned latent space, it follows intuitively that the min-max distance ratio, i.e., $(d_{max}^{\textbf{E}^{*}\textbf{M}^{*}} - d_{min}^{\textbf{E}^{*}\textbf{M}^{*}})/d_{min}^{\textbf{E}^{*}\textbf{M}^{*}}$, should invariably possess a lower-bound.

\begin{theorem}
    Given any set of i.i.d x, $x_{1}, x_{2}, \dots, x_{N}$ $\in \mathbb{R}^{l}$, we denote $d_{max}^{\textbf{E}^{*}\textbf{M}^{*}} = \underset{1 \leq j \leq N}{\max} d^{\textbf{E}^{*}\textbf{M}^{*}}(x,x_{j})$ and\\ $d_{min}^{\textbf{E}^{*}\textbf{M}^{*}} = \underset{1 \leq j \leq N}{\min} d^{\textbf{E}^{*}\textbf{M}^{*}}(x,x_{j})$, then we always have the conditional probability:
    \begin{equation}
    \label{eq:th2}
        \mathbb{P}\left( \frac{d_{max}^{\textbf{E}^{*}\textbf{M}^{*}} - d_{min}^{\textbf{E}^{*}\textbf{M}^{*}}}{d_{min}^{\textbf{E}^{*}\textbf{M}^{*}}} \geq \Theta(\mathcal{D}, \lambda) \middle| \text{$\lambda$ $>$ 0} \right) = 1
    \end{equation}
    where $d^{\textbf{E}^{*}\textbf{M}^{*}}(x,x_{j}) = \frac{\|\textbf{M}^{*}(\textbf{E}^{*}(x)) - \textbf{M}^{*}(\textbf{E}^{*}(x_{i})) \|_{2}}{rank(\textbf{M}^{*})}$, $\mathcal{D}$ denotes the training dataset and $\Theta(\mathcal{D},\lambda)$ depends on the training set and regularization penalty parameter $\lambda$.
\end{theorem}
By examining Eq.(\ref{eq:th2}), it becomes evident that the min-max distance ratio inherently possesses a lower bound due to the incorporation of the nuclear norm regularization term. This suggests that the penalty parameter $\lambda$ governs the establishment of the lower bound. Given that the min-max distance ratio maintains a consistent lower bound, LoRAE is equipped to effectively discriminate between similar and dissimilar data points. Consequently, the learned embeddings capture inherent similarities, leading to enhanced performance on downstream tasks.
\subsection{Rank of Latent Space of LoRAE}
\label{sec:rank_latent_space}
As mentioned in Section~\ref{sec:effect_of_lambda}, it is intuitively understood that stronger regularization in LoRAE leads to a reduction in the rank of the latent space. Additionally, within the same section, we presented empirical evidence of an inverse correlation between the rank of the latent space and the parameter $\lambda$. To provide robust mathematical support, we propose the following proposition.
\begin{proposition}
    \emph{The rank of the latent space follows $\mathcal{O}(1/
    \lambda)$.}
\end{proposition}
Proof of this proposition is deferred to supplementary material.
\section{Conclusion}
A pivotal element within autoencoder methodologies revolves around minimizing the information capacity of the latent space it learns. In this study, we extend beyond mere implicit measures and actively minimize the latent capacity of an autoencoder. Our approach involves minimizing the rank of the empirical covariance matrix associated with the latent space. This is achieved by incorporating a nuclear norm penalty term into the loss function. This addition aids the autoencoder in acquiring representations characterized by significantly reduced dimensions. 

Incorporating a nuclear norm penalty alongside the vanilla reconstruction loss introduces various mathematical assurances. We provide robust mathematical assurances regarding the convergence of our algorithm and elucidate the factors contributing to its commendable performance in downstream tasks. Comparison experiments across multiple domains involving image generation and representation learning indicated that our learning algorithm acquires more reliable feature embedding than baseline methods. Both the theoretical and experimental results clearly demonstrated the necessity/significance of learning low-dimensional embeddings in autoencoders. 
\section*{Acknowledgement}
Alokendu Mazumder is supported by the Prime Minister's Research Fellowship (PMRF), India.

{\small
\bibliographystyle{unsrt}
\bibliography{references}
}
\clearpage
\onecolumn
\setcounter{section}{0}
\setcounter{figure}{0}
\setcounter{table}{0}

\begin{center}
    \LARGE \textbf{Supplementary Material}
\end{center}\

\vspace{2cm}

\section{Experiment Parameters}
\subsection{Dataset}
This paper encompasses a range of experiments conducted using the MNIST and CelebA datasets. To ensure uniformity and facilitate comparisons, all images from the MNIST dataset were resized to dimensions of 32 $\times$ 32 pixels. Likewise, with the CelebA dataset, images were initially center-cropped to 148 $\times$ 148 pixels and subsequently resized to 64 $\times$ 64 pixels.

\subsection{Model Architecture}
The encoder and decoder architectures for each experiment are detailed below. The notation $Conv_{n}$ and $ConvT_{n}$ signify a convolutional and transposed-convolutional layer with an output channel dimension of $n$ respectively. All convolutional layers employ a $4 \times 4$ kernel size with a stride of 2 and padding of 1. $FC_{n}$ denotes a fully connected network with an output dimension of $n$.
\vspace{10pt}
\begin{table}[!htbp]
    \centering
    \caption{Architecture of encoder and decoder for MNIST and CelebA dataset.}
    \begin{tabular}{c | c | c}
        \hline
        \multirow{2}{*}{Datasets} & \multirow{2}{*}{MNIST} & \multirow{2}{*}{CelebA} \\
         &  &  \\
        \hline
        \hline
        Encoder & \begin{tabular}{@{}c@{}}$x \in \mathbb{R}^{32 \times 32 \times 1}$ \\ $\rightarrow Conv_{32} \rightarrow ReLU$ \\ $\rightarrow Conv_{64} \rightarrow ReLU$ \\ $\rightarrow Conv_{128} \rightarrow ReLU$ \\ $\rightarrow Conv_{256} \rightarrow ReLU$ \\ Flatten 1024 \\ $\rightarrow FC_{128} \rightarrow z \in \mathbb{R}^{128}$\end{tabular} & \begin{tabular}{@{}c@{}}$x \in \mathbb{R}^{64 \times 64 \times 3}$ \\ $\rightarrow Conv_{128} \rightarrow ReLU$ \\ $\rightarrow Conv_{256} \rightarrow ReLU$ \\ $\rightarrow Conv_{512} \rightarrow ReLU$ \\ $\rightarrow Conv_{1024} \rightarrow ReLU$ \\ Flatten 16,384 \\ $\rightarrow FC_{256} \rightarrow z \in \mathbb{R}^{256}$\end{tabular} \\
        \hline
        Decoder & \begin{tabular}{@{}c@{}}$z \in \mathbb{R}^{128}$ \\ $FC_{8096}$ \\ Reshape to $8 \times 8 \times 128$ \\ $\rightarrow ConvT_{64} \rightarrow ReLU$ \\ $\rightarrow ConvT_{32} \rightarrow ReLU$ \\ $\rightarrow ConvT_{3} \rightarrow Tanh$ \\ $\hat{x} \in \mathbb{R}^{32 \times 32 \times 1}$\end{tabular} & \begin{tabular}{@{}c@{}}$z \in \mathbb{R}^{512}$ \\ $FC_{65536}$ \\ Reshape to $8 \times 8 \times 1024$ \\ $\rightarrow ConvT_{512} \rightarrow ReLU$ \\ $\rightarrow ConvT_{256} \rightarrow ReLU$ \\ $\rightarrow ConvT_{128} \rightarrow ReLU$ \\ $\rightarrow ConvT_{3} \rightarrow Tanh$ \\ $\hat{x} \in \mathbb{R}^{64 \times 64 \times 3}$\end{tabular} \\
        \hline
    \end{tabular}
    
\end{table}
\newpage
\subsection{Hyperparameter Settings}
Our model underwent training based on the hyperparameter settings provided below.
\begin{table}[!htbp]
\centering
\caption{The hyperparameters for each experiment are elaborated in the following table. The determination of the number of epochs was guided by the aim of attaining a stage of converged reconstruction error.}
\label{tab:resultsC}
\begin{tabular}{c c c}
\hline
Dataset & MNIST & CelebA \\
\hline
\hline
Batch Size & 32 & 32 \\
\hline
Epochs & 50 & 100 \\
\hline
Training Examples & 60,000 & 16,2079 \\
\hline
Test Examples & 10,000 & 20,000 \\
\hline
Dimension of Latent Space & 128 & 128 \\
\hline
Learning Rate & $10^{-3}$ & $10^{-3}$ \\
\hline
$\lambda$ & $10^{-3}$ & $10^{-5}$ \\
\hline
\end{tabular}
\end{table}
\FloatBarrier

\section{Theoretical Analysis}

In this section, we provide a detailed proof for each of the theorems introduced in our paper. Before delving into the proof explanations, we'll establish an understanding of the symbols and terms that will be employed throughout the proofs.

\subsection{Notations}
\begin{enumerate}
\item \emph{For our proposed model:}
\begin{itemize}

    \item We denote the combined parameter of encoder $(\textbf{E})$, decoder $(\textbf{D})$ and the matrix between encoder and decoder $(\textbf{M})$ by $\textbf{w}$. Hence, from now onwards, $\textbf{w}$ is the parameter set of our model.
    
    \item $\textbf{w}_{t}$ denotes the parameter of our model at $t^{th}$ iteration.
    \item $\textbf{w}^{*}$ denotes the parameter of our model after convergence. 

    \item We denote the loss function of our model as:
    \begin{equation}
    \label{eq:reduced_loss}
        \mathcal{L}(\textbf{w}) = \mathcal{L}(\textbf{E}, \textbf{D}, \textbf{M}) = \underbrace{\|\textbf{D} - (\textbf{M}(\textbf{E}(x))\|_{2}^{2}}_{\text{$\mathcal{L}_{mse}(\textbf{w}) = \mathcal{L}_{mse}(\textbf{E}, \textbf{D}, \textbf{M})$}} + \|\textbf{M}\|_{*}
    \end{equation}
    or, in short hand $\mathcal{L}(\textbf{w}) = \mathcal{L}_{mse}(\textbf{w}) + \|\textbf{M}\|_{*}$.
\end{itemize}
    \item \emph{For ADAM Optimizer:}
    \begin{itemize}

        \item The ADAM update for our model can be written as:
        \begin{equation}
        \label{eq:adam_update}
            \textbf{w}_{t+1} = \textbf{w}_{t} - \alpha(V_{t}^{1/2} + diag(\epsilon\mathbb{I}))^{-1}\textbf{m}_{t}
        \end{equation}
        where , $\textbf{m}_{t} = \beta_{1}\textbf{m}_{t-1} + (1 - \beta_{1})\nabla\mathcal{L}(\textbf{w}_{t})$, $\textbf{v}_{t} = \beta_{2}\textbf{v}_{t-1} + (1 - \beta_{2})(\nabla\mathcal{L}(\textbf{w}_{t}))^{2}$, $V_{t} = diag(\textbf{v}_{t})$ is a diagonal matrix, $\beta_{1}$, $\beta_{2}$ $\in (0,1)$ and $\epsilon > 0$. 
        \item $\alpha > 0$, is the constant step size.
        \item One can clearly see from the equation $\textbf{v}_{t} = \beta_{2}\textbf{v}_{t-1} + (1 - \beta_{2})(\nabla\mathcal{L}(\textbf{w}_{t}))^{2}$ that $\textbf{v}_{t}$ will be always non-negative. Also, the term $\epsilon$ in $diag(\epsilon\mathbb{I})$ will always keep the matrix (diagonal matrix) $(V_{t}^{1/2} + diag(\epsilon\mathbb{I}))^{-1}$ positive definite (PD). 
        \item From now onward, to avoid using too much terms in derivation, we will denote the matrix $(V_{t}^{1/2} + diag(\epsilon\mathbb{I}))^{-1}$ as $\textbf{A}_{t}$.
        \item Hence, the ADAM update in Eq.(\ref{eq:adam_update}) will now look like this.
        \begin{equation}
            \textbf{w}_{t+1} = \textbf{w}_{t} - \alpha\textbf{A}_{t}\textbf{m}_{t}
        \end{equation}
        \item We will denote the gradient of the loss function as $\nabla\mathcal{L}
        (\textbf{w})$ for simplicity in rest of our proof.
        
    \end{itemize}
\end{enumerate}
\subsection{Proofs}
\begin{theorem}
Let the loss function $\mathcal{L}(\textbf{E}, \textbf{D}, \textbf{M})$ be $K-$Lipchitz and let $\gamma < \infty$ be an upper bound on the norm of the gradient of $\mathcal{L}$. Then the following holds for the deterministic version (when batch size = total dataset) of Algorithm (1):

For any $\sigma > 0$ if we let $\alpha = \sqrt{2(\mathcal{L}(\textbf{E}_{0}, \textbf{D}_{0}, \textbf{M}_{0}) - \mathcal{L}(\textbf{E}^{*}, \textbf{D}^{*}, \textbf{M}^{*}))/K\delta^{2}T}$, then there exists a natural number $T(\sigma,\delta)$ (depends on $\sigma$ and $\delta$) such that $\|\mathcal{L}(\textbf{E}_{t}, \textbf{D}_{t}, \textbf{M}_{t})\|_{2} \leq \sigma$ for some $t \geq T(\sigma,\delta)$, where $\delta^{2} = \frac{\gamma^{2}}{\epsilon^{2}}$.
\end{theorem}

\begin{proof}
We aim to prove Theorem (1) with contradiction. Let $\|\nabla\mathcal{L}(\textbf{w}_{t})\|_{2} > \sigma > 0$ for all $t \in \{1, 2, \dots\}$. Using Lipchitz continuity, we can write:

\begin{align}
    \mathcal{L}(\textbf{w}_{t+1}) - \mathcal{L}(\textbf{w}_{t})\leq & \hspace{5pt}\nabla\mathcal{L}(\textbf{w}_{t})^{T}(\textbf{w}_{t+1} - \textbf{w}_{t})
    + \frac{K}{2}\|\textbf{w}_{t+1} - \textbf{w}_{t}\|_{2}^{2}\notag \\
    \leq & -\alpha\nabla\mathcal{L}(\textbf{w}_{t})^{T}(\textbf{A}_{t}\textbf{m}_{t}) + \frac{K}{2}\alpha^{2}\|\textbf{A}_{t}\textbf{m}_{t}\|_{2}^{2}\label{eq:1}
\end{align}
One can clearly see that $\textbf{A}_{t}$ is \emph{positive definite} (PD). From here, we will find an upper bound and lower bound on the last and first terms of RHS of Eq.(\ref{eq:1}), respectively. 

Consider the term $\|\textbf{A}_{t}\textbf{m}_{t}\|_{2}$. We have $\lambda_{max}(\textbf{A}_{t}) \leq  \frac{1}{\epsilon + \underset{1 \leq i \leq |\textbf{v}_{t}|}{min} \sqrt{(\textbf{v}_{t})_{i}}}$. Further we note that recursion of $\textbf{v}_{t}$ can be solved as $\textbf{v}_{t} = (1 - \beta_{2})\sum_{j=1}^{t}\beta_{2}^{t-j}(\nabla\mathcal{L}(\textbf{w}_{j}))^{2}$. Now we define $\rho_{t} = \underset{1 \leq j \leq t, 1 \leq k \leq |\textbf{v}_{t}|}{min}(\nabla\mathcal{L}(\textbf{w}_{j})^{2})_{k}$. This gives us the following:
\begin{align}
    \lambda_{max}(\textbf{A}_{t}) \leq \frac{1}{\epsilon + \sqrt{(1-\beta_{2}^{t})\rho_{t}}}
\end{align}
The equation of $\textbf{m}_{t}$ without recursion is $\textbf{m}_{t} = (1 - \beta_{1})\sum_{j=1}^{t}\beta_{1}^{t-j}\nabla\mathcal{L}(\textbf{w}_{j})$. Let us define $\gamma_{t} = \underset{1 \leq j \leq t}{max}\|\nabla\mathcal{L}(\textbf{w}_{j})\|$ then by using triangle inequality, we have $\|\textbf{m}_{t}\|_{2} \leq (1 - \beta_{1}^{t})\gamma_{t}$. We can rewrite $\|\textbf{A}_{t}\textbf{m}_{t}\|_{2}$ as:
\begin{align}
    \|\textbf{A}_{t}\textbf{m}_{t}\|_{2} \leq &\hspace{5pt} \frac{(1-\beta_{1}^{t})\gamma_{t}}{\epsilon + \sqrt{\rho_{t}(1-\beta_{2}^{t})}} \leq \frac{(1-\beta_{1}^{t})\gamma_{t}}{\epsilon} \leq \frac{\gamma_{t}}{\epsilon}\label{eq:3}
\end{align}
Taking $\gamma_{t-1} = \gamma_{t} = \gamma$ and plugging Eq.(\ref{eq:3}) in Eq.(\ref{eq:1}):
\begin{align}
    \mathcal{L}(\textbf{w}_{t+1}) - \mathcal{L}(\textbf{w}_{t})\leq -\alpha\nabla\mathcal{L}(\textbf{w}_{t})^{T}(\textbf{A}_{t}\textbf{m}_{t})\hspace{5pt} + \frac{K}{2}\alpha^{2}\frac{\gamma^{2}}{\epsilon^{2}}\label{eq:4} 
\end{align}
Now, we will investigate the term $\nabla\mathcal{L}(\textbf{w}_{t})^{T}(\textbf{A}_{t}\textbf{m}_{t})$ separately, \emph{i.e.} we will find a lower bound on this term. To analyze this, we define the following sequence of functions: 
\begin{align}
    P_{j} - \beta_{1}P_{j-1} = & \hspace{5pt}\nabla\mathcal{L}(\textbf{w}_{t})^{T}\textbf{A}_{t}(\textbf{m}_{j} - \beta_{1}\textbf{m}_{j-1})\notag \\
    = & \hspace{5pt}(1 - \beta_{1})\nabla\mathcal{L}(\textbf{w}_{t})^{T}(\textbf{A}_{t}\nabla\mathcal{L}(\textbf{w}_{j}))\notag
\end{align}
At $j = t$, we have:
\begin{align}
    P_{t} - \beta_{1}P_{t-1} \geq & \hspace{5pt}(1 - \beta_{1})\|\nabla\mathcal{L}(\textbf{w}_{t})\|_{2}^{2}\lambda_{min}(\textbf{A}_{t})\notag
\end{align}
Let us (again) define $\gamma_{t-1} = \underset{1 \leq j \leq t-1}{max}\|\nabla\mathcal{L}(\textbf{w}_{j})\|_{2}$, and $\forall j \in \{1, 2, \dots t-1\}$:
\begin{align}
    P_{j} - \beta_{1}P_{j-1} \geq & \hspace{5pt}-(1 - \beta_{1})\|\nabla\mathcal{L}(\textbf{w}_{t})\|_{2} \gamma_{t-1}\lambda_{max}(\textbf{A}_{t})\notag
\end{align}
Now, we note the following identity:
\begin{align}
    P_{t} - \beta_{1}^{t}P_{0} = & \hspace{5pt}\sum_{j=1}^{t-1}\beta_{1}^{j}(P_{t-j} - \beta_{1}P_{t-j-1})\notag
\end{align}
Now, we use the lower bounds proven on $P_{j} - \beta_{1}P_{j-1}$ $\forall j \in \{1, 2, \dots t-1\}$ and  $P_{t} - \beta_{1}P_{t-1}$ to lower bound the above sum as:
\begin{align}
    P_{t} - \beta_{1}^{t}P_{0}\geq & \hspace{5pt}(1 - \beta_{1})\|\nabla\mathcal{L}(\textbf{w}_{t})\|_{2}^{2}\lambda_{min}(\textbf{A}_{t}) - (1 - \beta_{1})\|\nabla\mathcal{L}(\textbf{w}_{t})\|_{2} \gamma_{t-1}\lambda_{max}(\textbf{A}_{t})\sum_{j=0}^{t-1}\beta_{1}^{j}\notag\\
    \geq & \hspace{5pt}(1 - \beta_{1})\|\nabla\mathcal{L}(\textbf{w}_{t})\|_{2}^{2}\lambda_{min}(\textbf{A}_{t}) - (\beta_{1} - \beta_{1}^{t})\|\nabla\mathcal{L}(\textbf{w}_{t})\|_{2} \gamma_{t-1}\lambda_{max}(\textbf{A}_{t})\notag\\
    \geq & \hspace{5pt}\|\nabla\mathcal{L}(\textbf{w}_{t})\|_{2}^{2}\left((1 - \beta_{1})\lambda_{min}(\textbf{A}_{t}) - \frac{(\beta_{1} - \beta_{1}^{t})\gamma_{t-1}\lambda_{max}(\textbf{A}_{t})}{\|\nabla\mathcal{L}(\textbf{w}_{t})\|_{2}}\right)\notag\\
    \geq & \hspace{5pt}\|\nabla\mathcal{L}(\textbf{w}_{t})\|_{2}^{2}\left((1 - \beta_{1})\lambda_{min}(\textbf{A}_{t}) - \frac{(\beta_{1} - \beta_{1}^{t})\gamma_{t-1}\lambda_{max}(\textbf{A}_{t})}{\sigma}\right) \hspace{5pt}\label{eq:5}\text{\scriptsize (From Contradiction)}
\end{align}
The inequality in Eq.(\ref{eq:5}) will be maintained as the term $\left((1 - \beta_{1})\lambda_{min}(\textbf{A}_{t}) - \frac{(\beta_{1} - \beta_{1}^{t})\gamma_{t-1}\lambda_{max}(\textbf{A}_{t})}{\sigma}\right)$ is lower bounded by some positive constant $c$. We will show this later in \textbf{Extension 1}. 

Hence, we let $\left((1 - \beta_{1})\lambda_{min}(\textbf{A}_{t}) - \frac{(\beta_{1} - \beta_{1}^{t})\gamma_{t-1}\lambda_{max}(\textbf{A}_{t})}{\sigma}\right) \geq c > 0$ and put $P_{0} =0$ (from definition and initial conditions) in the above equation and get:
\begin{align}
    P_{t} = & \hspace{5pt} \nabla\mathcal{L}(\textbf{w}_{t})^{T}(\textbf{A}_{t}\textbf{m}_{t}) \geq \hspace{5pt} c\|\nabla\mathcal{L}(\textbf{w}_{t})\|_{2}^{2}\label{eq:6} 
\end{align}
Now we are done with computing the bounds on the terms in Eq.(\ref{eq:1}). Hence, we combine Eq.(\ref{eq:6}) with Eq.(\ref{eq:4}) to get:
\begin{align}
    \mathcal{L}(\textbf{w}_{t+1}) - \mathcal{L}(\textbf{w}_{t}) \leq & \hspace{5pt} -\alpha c\|\nabla\mathcal{L}(\textbf{w}_{t})\|_{2}^{2} + \frac{K}{2}\alpha^{2}\frac{\gamma^{2}}{\epsilon^{2}}\notag
\end{align}
Let $\delta^{2} = \frac{\gamma^{2}}{\epsilon^{2}}$ for simplicity. We have:
\begin{align}
    \mathcal{L}(\textbf{w}_{t+1}) - \mathcal{L}(\textbf{w}_{t})\leq & \hspace{5pt} -\alpha c\|\nabla\mathcal{L}(\textbf{w}_{t})\|_{2}^{2} + \frac{K}{2}\alpha^{2}\delta^{2}\notag\\
    \alpha c\|\nabla\mathcal{L}(\textbf{w}_{t})\|_{2}^{2} \leq & \hspace{5pt} \mathcal{L}(\textbf{w}_{t}) - \mathcal{L}(\textbf{w}_{t+1}) + \frac{K}{2}\alpha^{2}\delta^{2}\notag\\
    \|\nabla\mathcal{L}(\textbf{w}_{t})\|_{2}^{2} \leq & \hspace{5pt} \frac{\mathcal{L}(\textbf{w}_{t}) - \mathcal{L}(\textbf{w}_{t+1})}{\alpha c} + \frac{K\alpha\delta^{2}}{2c}\label{eq:7}
\end{align}    
From Eq.(\ref{eq:7}), we have the following inequalities:
\begin{align}
\left\{
\begin{aligned}
    \|\nabla\mathcal{L}(\textbf{w}_{0})\|_{2}^{2} \leq & \hspace{5pt} \frac{\mathcal{L}(\textbf{w}_{0}) - \mathcal{L}(\textbf{w}_{1})}{\alpha c} + \frac{K\alpha\delta^{2}}{2c} & \notag\\
    \|\nabla\mathcal{L}(\textbf{w}_{1})\|_{2}^{2} \leq & \hspace{5pt} \frac{\mathcal{L}(\textbf{w}_{1}) - \mathcal{L}(\textbf{W}_{2})}{\alpha c} + \frac{K\alpha\delta^{2}}{2c} & \notag\\
    & \vdots \\
    \|\nabla\mathcal{L}(\textbf{w}_{T-1})\|_{2}^{2} \leq & \hspace{5pt} \frac{\mathcal{L}(\textbf{w}_{T-1}) - \mathcal{L}(\textbf{w}_{t})}{\alpha c} + \frac{K\alpha\delta^{2}}{2c} &\\
\end{aligned}
\right.
\end{align}
Summing up all the inequalities presented above, we obtain:
\begin{align*}
    \sum_{t=0}^{T-1} \|\nabla\mathcal{L}(\textbf{w}_{t})\|_{2}^{2} \leq & \hspace{5pt} \frac{\mathcal{L}(\textbf{w}_{0}) - \mathcal{L}(\textbf{w}_{t})}{\alpha c} + \frac{K\alpha\delta^{2}T}{2c}
\end{align*}
The inequality remains valid if we substitute $\|\nabla\mathcal{L}(\textbf{w}_{t})\|_{2}^{2}$ with $\underset{0\leq t \leq T-1}{min}\|\nabla\mathcal{L}(\textbf{w}_{t})\|_{2}^{2}$ within the summation on the left-hand side (LHS).
\begin{align}
    \underset{0 \leq t \leq T-1}{min}\|\nabla\mathcal{L}(\textbf{w}_{t})\|_{2}^{2}T \leq & \hspace{5pt} \frac{\mathcal{L}(\textbf{w}_{0}) - \mathcal{L}(\textbf{w}^{*})}{\alpha c} + \frac{K\alpha\delta^{2}T}{2c}\notag\\
    \underset{0 \leq t \leq T-1}{min}\|\nabla\mathcal{L}(\textbf{w}_{t})\|_{2}^{2} \leq & \hspace{5pt} \frac{\mathcal{L}(\textbf{w}_{0}) - \mathcal{L}(\textbf{w}^{*})}{\alpha c T} + \frac{K\alpha\delta^{2}}{2c}\notag\\
    \underset{0 \leq t \leq T-1}{min}\|\nabla\mathcal{L}(\textbf{w}_{t})\|_{2}^{2} \leq & \hspace{1pt} \frac{1}{\sqrt{T}}\left(\frac{\mathcal{L}(\textbf{w}_{0}) - \mathcal{L}(\textbf{w}^{*})}{cb} + \frac{K\delta^{2}b}{2c}\right)\notag
\end{align}
where $b = \alpha\sqrt{T}$. We set $b = \sqrt{2(\mathcal{L}(\textbf{w}_{0}) - \mathcal{L}(\textbf{w}^{*})\delta^{2})/K\delta^{2}}$, and we have:
\begin{align}
    \underset{0 \leq t \leq T-1}{\min} \|\nabla\mathcal{L}(\textbf{w}_{t})\|_{2} \leq \left(\frac{2K\delta^{2}}{T}(\mathcal{L}(\textbf{w}_{0})-\mathcal{L}(\textbf{w}^{*}))\right)^{\frac{1}{4}}\notag
\end{align}
When $T \geq \left(\frac{2K\delta^{2}}{\sigma^{4}}(\mathcal{L}(\textbf{w}_{0})-\mathcal{L}(\textbf{w}^{*}))\right) = T(\sigma)$, we will have $\underset{0 \leq t \leq T-1}{\min} \|\nabla\mathcal{L}(\textbf{w}_{t})\|_{2} \leq \sigma$ which will contradict the assumption, \emph{i.e.} ($\|\nabla\mathcal{L}(\textbf{w}_{t})\|_{2} > \sigma$ for all $t \in \{1, 2, \dots\}$). Hence, completing the proof.
\end{proof}

\begin{theorem}
    Given any set of i.i.d x, $x_{1}, x_{2}, \dots, x_{N}$ $\in \mathbb{R}^{l}$, we denote $d_{max}^{\textbf{E}^{*}\textbf{M}^{*}} = \underset{1 \leq j \leq N}{\max} d^{\textbf{E}^{*}\textbf{M}^{*}}(x,x_{j})$ and \\$d_{min}^{\textbf{E}^{*}\textbf{M}^{*}} = \underset{1 \leq j \leq N}{\min} d^{\textbf{E}^{*}\textbf{M}^{*}}(x,x_{j})$, then we always have the conditional probability:
    \begin{equation}
        \mathbb{P}\left( \frac{d_{max}^{\textbf{E}^{*}\textbf{M}^{*}} - d_{min}^{\textbf{E}^{*}\textbf{M}^{*}}}{d_{min}^{\textbf{E}^{*}\textbf{M}^{*}}} \geq \Theta(\mathcal{D}, \lambda) \middle| \text{$\lambda$ $>$ 0} \right) = 1
    \end{equation}
    where $d^{\textbf{E}^{*}\textbf{M}^{*}}(x,x_{j}) = \frac{\|\textbf{M}^{*}(\textbf{E}^{*}(x)) - \textbf{M}^{*}(\textbf{E}^{*}(x_{i})) \|_{2}}{rank(\textbf{M}^{*})}$, $\mathcal{D}$ denotes the training dataset and $\Theta(\mathcal{D},\lambda)$ depends on the training set and regularization penalty parameter $\lambda$.
\end{theorem}
\begin{proof}
    As $\textbf{w}^{*}$ is learned from Algorithm (1), we always have:
    \begin{align}
        \mathcal{L}(\textbf{w}^{*}) \quad \leq & \quad \mathcal{L}(\textbf{w}_{0})\notag
    \end{align}
    where, $\mathcal{L}(\textbf{w}_{0})$ is loss of our model at $0^{th}$ epoch. Hence,
    \begin{align}
        \mathcal{L}_{mse}(\textbf{w}^{*}) + \lambda\|\textbf{M}^{*}\|_{*} \quad \leq & \quad \mathcal{L}_{mse}(\textbf{W}_{0}) + \lambda\|\textbf{M}_{0}\|_{*}\notag\\
        \lambda\|\textbf{M}^{*}\|_{*} \quad \leq & \quad \mathcal{L}_{mse}(\textbf{w}_{0}) -  \mathcal{L}_{mse}(\textbf{w}^{*}) + \lambda\|\textbf{M}_{0}\|_{*}\notag\\
        \|\textbf{M}^{*}\|_{*} \quad \leq & \quad \frac{1}{\lambda}\left(\mathcal{L}_{mse}(\textbf{w}_{0}) -  \mathcal{L}_{mse}(\textbf{w}^{*})\right) + \|\textbf{M}_{0}\|_{*}\notag\\
        \|\textbf{M}^{*}\|_{*} \quad \leq & \quad \frac{1}{\lambda}\left(c_{1} -  c_{2}\right) + c_{3}\label{eq:12}
\end{align}
where, $c_{1} = \mathcal{L}_{mse}(\textbf{w}_{0})$, $c_{2} = \mathcal{L}_{mse}(\textbf{w}^{*})$, and $c_{3} = \|\textbf{M}_{0}\|_{*}$.
Now, from Eq.(\ref{eq:12}) we can estimate an upperbound on the rank of matrix $\textbf{M}^{*}$:
\begin{align}
    rank(\textbf{M}^{*}) \quad \leq & \quad c\left(\frac{1}{\lambda}\left(c_{1} -  c_{2}\right) + c_{3}\right) \quad\text{(where $c \in \mathbb{R}^{+}$)}\label{eq:13}
\end{align}
Using the definition of $d_{max}^{\textbf{E}^{*}\textbf{M}^{*}}$ and $d_{min}^{\textbf{E}^{*}\textbf{M}^{*}}$, we have:
\begin{align}
    \frac{d_{max}^{\textbf{E}^{*}\textbf{M}^{*}} - d_{min}^{\textbf{E}^{*}\textbf{M}^{*}}}{d_{min}^{\textbf{E}^{*}\textbf{M}^{*}}} \quad = & \quad \frac{\underset{i \in [n]}{max}\frac{\|\textbf{M}^{*}(\textbf{E}^{*}(x)) - \textbf{M}^{*}(\textbf{E}^{*}(x_{i})) \|_{2}}{rank(\textbf{M}^{*})} - \underset{i \in [n]}{min}\frac{\|\textbf{M}^{*}(\textbf{E}^{*}(x)) - \textbf{M}^{*}(\textbf{E}^{*}(x_{i})) \|_{2}}{rank(\textbf{M}^{*})}}{\underset{i \in [n]}{min}\frac{\|\textbf{M}^{*}(\textbf{E}^{*}(x)) - \textbf{M}^{*}(\textbf{E}^{*}(x_{i})) \|_{2}}{rank(\textbf{M}^{*})}}\notag\\
    \quad = & \quad \frac{\underset{i \in [n]}{max}\frac{\|\textbf{M}^{*}(\textbf{E}^{*}(x)) - \textbf{M}^{*}(\textbf{E}^{*}(x_{i})) \|_{2}}{rank(\textbf{M}^{*})}}{\underset{i \in [n]}{min}\frac{\|\textbf{M}^{*}(\textbf{E}^{*}(x)) - \textbf{M}^{*}(\textbf{E}^{*}(x_{i})) \|_{2}}{rank(\textbf{M}^{*})}} - 1\notag\\
    \quad \geq & \quad \frac{\underset{i \in [n]}{max}\frac{\|\textbf{M}^{*}(\textbf{E}^{*}(x)) - \textbf{M}^{*}(\textbf{E}^{*}(x_{i})) \|_{2}}{c\left(\frac{1}{\lambda}\left(c_{1} -  c_{2}\right) + c_{3}\right)}}{\underset{i \in [n]}{min}\frac{\|\textbf{M}^{*}(\textbf{E}^{*}(x)) - \textbf{M}^{*}(\textbf{E}^{*}(x_{i})) \|_{2}}{rank(\textbf{M}^{*})}} - 1 \quad \text{(Using Eq.(15))}\notag\\
    \quad \geq & \quad \frac{L(\mathcal{D})}{c\left(\frac{1}{\lambda}\left(c_{1} -  c_{2}\right) + c_{3}\right)} \quad \text{$\left(here, L(\mathcal{D}) = \frac{\underset{i \in [n]}{max}\|\textbf{M}^{*}(\textbf{E}^{*}(x)) - \textbf{M}^{*}(\textbf{E}^{*}(x_{i})) \|_{2}}{\underset{i \in [n]}{min}\frac{\|\textbf{M}^{*}(\textbf{E}^{*}(x)) - \textbf{M}^{*}(\textbf{E}^{*}(x_{i})) \|_{2}}{rank(\textbf{M}^{*})}}\right)$}\notag\\
    \quad \geq & \quad \frac{\lambda L(\mathcal{D})}{c(c_{1} - c_{2}) + \lambda cc_{3}}\notag\\
    \quad \geq & \quad \frac{\lambda L(\mathcal{D})}{cc_{1}} = \Theta(\lambda, \mathcal{D}) > 0 \quad \text{when $\lambda > 0$}\notag
\end{align}
hence, completing the proof.
\end{proof}
\begin{proposition}
The rank of the latent space follows $\mathcal{O}(1/\lambda)$.
\end{proposition}
\begin{proof}
    Let $\textbf{E}^{*}$ denote the trained encoder of our model and let $x \in \mathbb{R}^{m \times n \times c}$ be an image with dimension $m \times n$ and $c$ number of channels. Let $y = \textbf{E}^{*}(x)$, then we can define the latent space of our model (LoRAE) as:
    \begin{align}
        z = \textbf{M}^{*}y = \textbf{M}^{*}(\textbf{E}^{*}(x))
    \end{align}
We define the rank of the latent space as the number of non-zero singular values of the covariance matrix of latent space, i.e $\mathbb{E}_{\mathcal{D}}[zz^{T}]$. We can write:
\begin{align}
    \mathbb{E}_{\mathcal{D}}[zz^{T}] \quad = & \quad \mathbb{E}_{\mathcal{D}}[\textbf{M}^{*}yy^{T}\textbf{M}^{*T}]\label{eq:15}
\end{align}
Eq.(\ref{eq:13}) from Theorem 2 states that:
\begin{align*}
    rank(\textbf{M}^{*}) \quad \leq & \quad \frac{c}{\lambda}\left(c_{1} -  c_{2}\right) + cc_{3} \quad\text{(where $c \in \mathbb{R}^{+}$)}
\end{align*}
As $\textbf{M}^{*}$ is deterministic in Eq.(\ref{eq:15}), the covariance matrix can be re-written as $ \textbf{M}^{*} \mathbb{E}_{\mathcal{D}} \left[ y y^T \right] \textbf{M}^{*T}$. An upper bound on the rank of $ \textbf{M}^{*} \mathbb{E}_{\mathcal{D}} \left[ x x^T \right] \textbf{M}^{*T}$ is the upper bound on the rank of $\textbf{M}^{*}$. Thus, from Eq.(\ref{eq:13}) of Theorem 2, this analysis gives an upper bound on the rank of latent space as $\mathcal{O}(1/\lambda)$.

\end{proof}
\newpage
\begin{extension}
    The term $\left((1 - \beta_{1})\lambda_{min}(\textbf{A}_{t}) - \frac{(\beta_{1} - \beta_{1}^{t})\gamma_{t-1}\lambda_{max}(\textbf{A}_{t})}{\sigma}\right)$ from Eq.(\ref{eq:5}) is always non-negetive.
\end{extension}
\begin{proof}
    We can construct a lower bound on $\lambda_{min}(\textbf{A}_{t})$ and an upper bound on $\lambda_{min}(\textbf{A}_{t})$ as follows:
\begin{align}
        \lambda_{min}(\textbf{A}_{t}) \quad \geq & \quad \frac{1}{\epsilon + \sqrt{\underset{1 \leq j \leq |\textbf{v}_{t}|}{max}(\textbf{v}_{t})_{j}}}\\
        \lambda_{max}(\textbf{A}_{t}) \quad \leq & \quad \frac{1}{\epsilon + \sqrt{\underset{1 \leq j \leq |\textbf{v}_{t}|}{min}(\textbf{v}_{t})_{j}}}
\end{align}
We remember that $\textbf{v}_{t}$ can be rewritten as $\textbf{v}_{t} = \beta_{2}\textbf{v}_{t-1} + (1 - \beta_{2})(\nabla\mathcal{L}(\textbf{w}_{t}))^{2}$, solving this recursion and defining $\rho_{t} = \underset{1 \leq j \leq t, 1 \leq k \leq |\textbf{v}_{t}|}{min}(\nabla\mathcal{L}(\textbf{w}_{j})^{2})_{k}$ and taking $\gamma_{t-1} = \gamma_{t} = \gamma$ we have:
\begin{align*}
    \lambda_{min}(\textbf{A}_{t}) \quad \geq & \quad \frac{1}{\epsilon + \sqrt{(1 - \beta_{2}^{t})\gamma^{2}}}\\
    \lambda_{max}(\textbf{A}_{t}) \quad \leq & \quad \frac{1}{\epsilon + \sqrt{(1 - \beta_{2}^{t})\rho_{t}}}
\end{align*}
Where, $\gamma_{t-1} = \underset{1 \leq j \leq t-1}{max}\|\nabla\mathcal{L}(\textbf{w}_{j})\|_{2}$, and $\forall j \in \{1, 2, \dots t-1\}$. Setting $\rho_{t} = 0$, we can rewrite the term \\$\left((1 - \beta_{1})\lambda_{min}(\textbf{A}_{t}) - \frac{(\beta_{1} - \beta_{1}^{t})\gamma_{t-1}\lambda_{max}(\textbf{A}_{t})}{\sigma}\right)$ as:

\begin{align}
    \left((1 - \beta_{1})\lambda_{min}(\textbf{A}_{t}) - \frac{(\beta_{1} - \beta_{1}^{t})\gamma_{t-1}\lambda_{max}(\textbf{A}_{t})}{\sigma}\right) \geq & \hspace{5pt}\left(\frac{(1 - \beta_{1})}{\epsilon +\gamma\sqrt{(1 - \beta_{2}^{t})}} - \frac{(\beta_{1} - \beta_{1}^{t})\gamma}{\epsilon\sigma}\right)\label{eq:8}\\
    \geq & \hspace{5pt}\frac{\epsilon\sigma(1-\beta_{1}) - \gamma(\beta_{1} - \beta_{1}^{t})(\epsilon + \gamma\sqrt{(1 - \beta_{2}^{t})})}{\epsilon\sigma(\epsilon + \gamma\sqrt{(1 - \beta_{2}^{t})})}\notag\\
    \geq & \hspace{5pt}\gamma(\beta_{1} - \beta_{1}^{t})\frac{\epsilon\left(\frac{\sigma(1 - \beta_{1})}{\gamma(\beta_{1} - \beta_{1}^{t})} - 1\right) - \gamma\sqrt{(1 - \beta_{2}^{t})}}{\epsilon\sigma(\epsilon + \gamma\sqrt{(1 - \beta_{2}^{t})})}\notag\\
    \geq & \hspace{5pt}\gamma(\beta_{1} - \beta_{1}^{t})\left(\frac{\sigma(1 - \beta_{1})}{\gamma(\beta_{1} - \beta_{1}^{t})} - 1\right)\frac{\epsilon - \left(\frac{\gamma\sqrt{(1 - \beta_{2}^{t})}}{\frac{(1 - \beta_{1}\sigma)}{(\beta_{1} - \beta_{1}^{t})\gamma} - 1}\right)}{\epsilon\sigma(\epsilon + \gamma\sqrt{(1 - \beta_{2}^{t})})}\notag
\end{align}

By definition $\beta_{1} \in (0,1)$ and hence $(\beta_{1} - \beta_{1}^{t}) \in (0,\beta_{1})$. This implies that $\frac{(1 - \beta_{1})\sigma}{(\beta_{1} - \beta_{1}^{t})\gamma} > \frac{(1 - \beta_{1})\sigma}{\beta_{1}\gamma} > 1$ where the last inequality follows due to the choice of $\sigma$ as stated in the beginning of this theorem. This allows us to define a constant $\frac{(1 - \beta_{1})\sigma}{\beta_{1}\gamma} - 1 := \psi_{1} > 0$ such that $\frac{(1 - \beta_{1})\sigma}{(\beta_{1} - \beta_{1}^{t})\gamma} - 1 > \psi_{1}$. Similarly, our definition of delta allows us to define another constant $\psi_{2} > 0$ to get:
\begin{align}
    \left(\frac{\gamma\sqrt{(1 - \beta_{2}^{t})}}{\frac{(1 - \beta_{1}\sigma)}{(\beta_{1} - \beta_{1}^{t})\gamma} - 1}\right) \quad < & \quad \frac{\gamma}{\psi_{1}} = \epsilon - \psi_{2}\label{eq:9}
\end{align}
Putting Eq.(\ref{eq:9}) in Eq.(\ref{eq:8}), we get:
\begin{align}
    &\left((1 - \beta_{1})\lambda_{min}(\textbf{A}_{t}) - \frac{(\beta_{1} - \beta_{1}^{t})\gamma_{t-1}\lambda_{max}(\textbf{A}_{t})}{\sigma}\right)\geq & \hspace{5pt}\left(\frac{\gamma(\beta_{1} - \beta_{1}^{2})\psi_{1}\psi_{2}}{\epsilon\sigma(\epsilon + \sigma)}\right) = c > 0\notag
\end{align}

\end{proof}


\end{document}


\title{Supplementary Material}

\author{
}
\maketitle


\section{Experiment Parameters}
\label{sec:intro}
\subsection{Dataset}
This paper encompasses a range of experiments conducted using the MNIST and CelebA datasets. To ensure uniformity and facilitate comparisons, all images from the MNIST dataset were resized to dimensions of 32 $\times$ 32 pixels. Likewise, with the CelebA dataset, images were initially center-cropped to 148 $\times$ 148 pixels and subsequently resized to 64 $\times$ 64 pixels.

\subsection{Model Architecture}
The encoder and decoder architectures for each experiment are detailed below. The notation $Conv_{n}$ and $ConvT_{n}$ signify a convolutional and transposed-convolutional layer with an output channel dimension of $n$ respectively. All convolutional layers employ a $4 \times 4$ kernel size with a stride of 2 and padding of 1. $FC_{n}$ denotes a fully connected network with an output dimension of $n$.
\vspace{10pt}
\begin{table}[!htbp]
    \centering
    \caption{Architecture of encoder and decoder for MNIST and CelebA dataset.}
    \begin{tabular}{c | c | c}
        \hline
        \multirow{2}{*}{Datasets} & \multirow{2}{*}{MNIST} & \multirow{2}{*}{CelebA} \\
         &  &  \\
        \hline
        \hline
        Encoder & \begin{tabular}{@{}c@{}}$x \in \mathbb{R}^{32 \times 32 \times 1}$ \\ $\rightarrow Conv_{32} \rightarrow ReLU$ \\ $\rightarrow Conv_{64} \rightarrow ReLU$ \\ $\rightarrow Conv_{128} \rightarrow ReLU$ \\ $\rightarrow Conv_{256} \rightarrow ReLU$ \\ Flatten 1024 \\ $\rightarrow FC_{128} \rightarrow z \in \mathbb{R}^{128}$\end{tabular} & \begin{tabular}{@{}c@{}}$x \in \mathbb{R}^{64 \times 64 \times 3}$ \\ $\rightarrow Conv_{128} \rightarrow ReLU$ \\ $\rightarrow Conv_{256} \rightarrow ReLU$ \\ $\rightarrow Conv_{512} \rightarrow ReLU$ \\ $\rightarrow Conv_{1024} \rightarrow ReLU$ \\ Flatten 16,384 \\ $\rightarrow FC_{256} \rightarrow z \in \mathbb{R}^{256}$\end{tabular} \\
        \hline
        Decoder & \begin{tabular}{@{}c@{}}$z \in \mathbb{R}^{128}$ \\ $FC_{8096}$ \\ Reshape to $8 \times 8 \times 128$ \\ $\rightarrow ConvT_{64} \rightarrow ReLU$ \\ $\rightarrow ConvT_{32} \rightarrow ReLU$ \\ $\rightarrow ConvT_{3} \rightarrow Tanh$ \\ $\hat{x} \in \mathbb{R}^{32 \times 32 \times 1}$\end{tabular} & \begin{tabular}{@{}c@{}}$z \in \mathbb{R}^{512}$ \\ $FC_{65536}$ \\ Reshape to $8 \times 8 \times 1024$ \\ $\rightarrow ConvT_{512} \rightarrow ReLU$ \\ $\rightarrow ConvT_{256} \rightarrow ReLU$ \\ $\rightarrow ConvT_{128} \rightarrow ReLU$ \\ $\rightarrow ConvT_{3} \rightarrow Tanh$ \\ $\hat{x} \in \mathbb{R}^{64 \times 64 \times 3}$\end{tabular} \\
        \hline
    \end{tabular}
    
\end{table}
\FloatBarrier
\newpage
\subsection{Hyperparameter Settings}
Our model underwent training based on the hyperparameter settings provided below.
\begin{table}[!htbp]
\centering
\caption{The hyperparameters for each experiment are elaborated in the following table. The determination of the number of epochs was guided by the aim of attaining a stage of converged reconstruction error.}
\label{tab:resultsC}
\begin{tabular}{c c c}
\hline
Dataset & MNIST & CelebA \\
\hline
\hline
Batch Size & 32 & 32 \\
\hline
Epochs & 50 & 100 \\
\hline
Training Examples & 60,000 & 16,2079 \\
\hline
Test Examples & 10,000 & 20,000 \\
\hline
Dimension of Latent Space & 128 & 128 \\
\hline
Learning Rate & $10^{-3}$ & $10^{-3}$ \\
\hline
$\lambda$ & $10^{-3}$ & $10^{-5}$ \\
\hline
\end{tabular}
\end{table}
\FloatBarrier



\section{Theoretical Analysis}

In this section, we provide a detailed proof for each of the theorems introduced in our paper. Before delving into the proof explanations, we'll establish an understanding of the symbols and terms that will be employed throughout the proofs.

\subsection{Notations}
\begin{enumerate}
\item \emph{For our proposed model:}
\begin{itemize}

    \item We denote the combined parameter of encoder $(\textbf{E})$, decoder $(\textbf{D})$ and the matrix between encoder and decoder $(\textbf{M})$ by $\textbf{w}$. Hence, from now onwards, $\textbf{w}$ is the parameter set of our model.
    
    \item $\textbf{w}_{t}$ denotes the parameter of our model at $t^{th}$ iteration.
    \item $\textbf{w}^{*}$ denotes the parameter of our model after convergence. 

    \item We denote the loss function of our model as:
    \begin{equation}
    \label{eq:reduced_loss}
        \mathcal{L}(\textbf{w}) = \mathcal{L}(\textbf{E}, \textbf{D}, \textbf{M}) = \underbrace{\|\textbf{D} - (\textbf{M}(\textbf{E}(x))\|_{2}^{2}}_{\text{$\mathcal{L}_{mse}(\textbf{w}) = \mathcal{L}_{mse}(\textbf{E}, \textbf{D}, \textbf{M})$}} + \|\textbf{M}\|_{*}
    \end{equation}
    or, in short hand $\mathcal{L}(\textbf{w}) = \mathcal{L}_{mse}(\textbf{w}) + \|\textbf{M}\|_{*}$.
\end{itemize}
    \item \emph{For ADAM Optimizer:}
    \begin{itemize}

        \item The ADAM update for our model can be written as:
        \begin{equation}
        \label{eq:adam_update}
            \textbf{w}_{t+1} = \textbf{w}_{t} - \alpha(V_{t}^{1/2} + diag(\epsilon\mathbb{I}))^{-1}\textbf{m}_{t}
        \end{equation}
        where , $\textbf{m}_{t} = \beta_{1}\textbf{m}_{t-1} + (1 - \beta_{1})\nabla\mathcal{L}(\textbf{w}_{t})$, $\textbf{v}_{t} = \beta_{2}\textbf{v}_{t-1} + (1 - \beta_{2})(\nabla\mathcal{L}(\textbf{w}_{t}))^{2}$, $V_{t} = diag(\textbf{v}_{t})$ is a diagonal matrix, $\beta_{1}$, $\beta_{2}$ $\in (0,1)$ and $\epsilon > 0$. 
        \item $\alpha > 0$, is the constant step size.
        \item One can clearly see from the equation $\textbf{v}_{t} = \beta_{2}\textbf{v}_{t-1} + (1 - \beta_{2})(\nabla\mathcal{L}(\textbf{w}_{t}))^{2}$ that $\textbf{v}_{t}$ will be always non-negative. Also, the term $\epsilon$ in $diag(\epsilon\mathbb{I})$ will always keep the matrix (diagonal matrix) $(V_{t}^{1/2} + diag(\epsilon\mathbb{I}))^{-1}$ positive definite (PD). 
        \item From now onward, to avoid using too much terms in derivation, we will denote the matrix $(V_{t}^{1/2} + diag(\epsilon\mathbb{I}))^{-1}$ as $\textbf{A}_{t}$.
        \item Hence, the ADAM update in Eq.(\ref{eq:adam_update}) will now look like this.
        \begin{equation}
            \textbf{w}_{t+1} = \textbf{w}_{t} - \alpha\textbf{A}_{t}\textbf{m}_{t}
        \end{equation}
        \item We will denote the gradient of the loss function as $\nabla\mathcal{L}
        (\textbf{w})$ for simplicity in rest of our proof.
        
    \end{itemize}
\end{enumerate}
\subsection{Proofs}
\begin{theorem}
Let the loss function $\mathcal{L}(\textbf{E}, \textbf{D}, \textbf{M})$ be $K-$Lipchitz and let $\gamma < \infty$ be an upper bound on the norm of the gradient of $\mathcal{L}$. Then the following holds for the deterministic version (when batch size = total dataset) of Algorithm (1):

For any $\sigma > 0$ if we let $\alpha = \sqrt{2(\mathcal{L}(\textbf{E}_{0}, \textbf{D}_{0}, \textbf{M}_{0}) - \mathcal{L}(\textbf{E}^{*}, \textbf{D}^{*}, \textbf{M}^{*}))/K\delta^{2}T}$, then there exists a natural number $T(\sigma,\delta)$ (depends on $\sigma$ and $\delta$) such that $\|\mathcal{L}(\textbf{E}_{t}, \textbf{D}_{t}, \textbf{M}_{t})\|_{2} \leq \sigma$ for some $t \geq T(\sigma,\delta)$, where $\delta^{2} = \frac{\gamma^{2}}{\epsilon^{2}}$.
\end{theorem}

\begin{proof}
We aim to prove Theorem (1) with contradiction. Let $\|\nabla\mathcal{L}(\textbf{w}_{t})\|_{2} > \sigma > 0$ for all $t \in \{1, 2, \dots\}$. Using Lipchitz continuity, we can write:

\begin{align}
    \mathcal{L}(\textbf{w}_{t+1}) - \mathcal{L}(\textbf{w}_{t})\leq & \hspace{5pt}\nabla\mathcal{L}(\textbf{w}_{t})^{T}(\textbf{w}_{t+1} - \textbf{w}_{t})
    + \frac{K}{2}\|\textbf{w}_{t+1} - \textbf{w}_{t}\|_{2}^{2}\notag \\
    \leq & -\alpha\nabla\mathcal{L}(\textbf{w}_{t})^{T}(\textbf{A}_{t}\textbf{m}_{t}) + \frac{K}{2}\alpha^{2}\|\textbf{A}_{t}\textbf{m}_{t}\|_{2}^{2}\label{eq:1}
\end{align}
One can clearly see that $\textbf{A}_{t}$ is \emph{positive definite} (PD). From here, we will find an upper bound and lower bound on the last and first terms of RHS of Eq.(\ref{eq:1}), respectively. 

Consider the term $\|\textbf{A}_{t}\textbf{m}_{t}\|_{2}$. We have $\lambda_{max}(\textbf{A}_{t}) \leq  \frac{1}{\epsilon + \underset{1 \leq i \leq |\textbf{v}_{t}|}{min} \sqrt{(\textbf{v}_{t})_{i}}}$. Further we note that recursion of $\textbf{v}_{t}$ can be solved as $\textbf{v}_{t} = (1 - \beta_{2})\sum_{j=1}^{t}\beta_{2}^{t-j}(\nabla\mathcal{L}(\textbf{w}_{j}))^{2}$. Now we define $\rho_{t} = \underset{1 \leq j \leq t, 1 \leq k \leq |\textbf{v}_{t}|}{min}(\nabla\mathcal{L}(\textbf{w}_{j})^{2})_{k}$. This gives us the following:
\begin{align}
    \lambda_{max}(\textbf{A}_{t}) \leq \frac{1}{\epsilon + \sqrt{(1-\beta_{2}^{t})\rho_{t}}}
\end{align}
The equation of $\textbf{m}_{t}$ without recursion is $\textbf{m}_{t} = (1 - \beta_{1})\sum_{j=1}^{t}\beta_{1}^{t-j}\nabla\mathcal{L}(\textbf{w}_{j})$. Let us define $\gamma_{t} = \underset{1 \leq j \leq t}{max}\|\nabla\mathcal{L}(\textbf{w}_{j})\|$ then by using triangle inequality, we have $\|\textbf{m}_{t}\|_{2} \leq (1 - \beta_{1}^{t})\gamma_{t}$. We can rewrite $\|\textbf{A}_{t}\textbf{m}_{t}\|_{2}$ as:
\begin{align}
    \|\textbf{A}_{t}\textbf{m}_{t}\|_{2} \leq &\hspace{5pt} \frac{(1-\beta_{1}^{t})\gamma_{t}}{\epsilon + \sqrt{\rho_{t}(1-\beta_{2}^{t})}} \leq \frac{(1-\beta_{1}^{t})\gamma_{t}}{\epsilon} \leq \frac{\gamma_{t}}{\epsilon}\label{eq:3}
\end{align}
Taking $\gamma_{t-1} = \gamma_{t} = \gamma$ and plugging Eq.(\ref{eq:3}) in Eq.(\ref{eq:1}):
\begin{align}
    \mathcal{L}(\textbf{w}_{t+1}) - \mathcal{L}(\textbf{w}_{t})\leq -\alpha\nabla\mathcal{L}(\textbf{w}_{t})^{T}(\textbf{A}_{t}\textbf{m}_{t})\hspace{5pt} + \frac{K}{2}\alpha^{2}\frac{\gamma^{2}}{\epsilon^{2}}\label{eq:4} 
\end{align}
Now, we will investigate the term $\nabla\mathcal{L}(\textbf{w}_{t})^{T}(\textbf{A}_{t}\textbf{m}_{t})$ separately, \emph{i.e.} we will find a lower bound on this term. To analyze this, we define the following sequence of functions: 
\begin{align}
    P_{j} - \beta_{1}P_{j-1} = & \hspace{5pt}\nabla\mathcal{L}(\textbf{w}_{t})^{T}\textbf{A}_{t}(\textbf{m}_{j} - \beta_{1}\textbf{m}_{j-1})\notag \\
    = & \hspace{5pt}(1 - \beta_{1})\nabla\mathcal{L}(\textbf{w}_{t})^{T}(\textbf{A}_{t}\nabla\mathcal{L}(\textbf{w}_{j}))\notag
\end{align}
At $j = t$, we have:
\begin{align}
    P_{t} - \beta_{1}P_{t-1} \geq & \hspace{5pt}(1 - \beta_{1})\|\nabla\mathcal{L}(\textbf{w}_{t})\|_{2}^{2}\lambda_{min}(\textbf{A}_{t})\notag
\end{align}
Let us (again) define $\gamma_{t-1} = \underset{1 \leq j \leq t-1}{max}\|\nabla\mathcal{L}(\textbf{w}_{j})\|_{2}$, and $\forall j \in \{1, 2, \dots t-1\}$:
\begin{align}
    P_{j} - \beta_{1}P_{j-1} \geq & \hspace{5pt}-(1 - \beta_{1})\|\nabla\mathcal{L}(\textbf{w}_{t})\|_{2} \gamma_{t-1}\lambda_{max}(\textbf{A}_{t})\notag
\end{align}
Now, we note the following identity:
\begin{align}
    P_{t} - \beta_{1}^{t}P_{0} = & \hspace{5pt}\sum_{j=1}^{t-1}\beta_{1}^{j}(P_{t-j} - \beta_{1}P_{t-j-1})\notag
\end{align}
Now, we use the lower bounds proven on $P_{j} - \beta_{1}P_{j-1}$ $\forall j \in \{1, 2, \dots t-1\}$ and  $P_{t} - \beta_{1}P_{t-1}$ to lower bound the above sum as:
\begin{align}
    P_{t} - \beta_{1}^{t}P_{0}\geq & \hspace{5pt}(1 - \beta_{1})\|\nabla\mathcal{L}(\textbf{w}_{t})\|_{2}^{2}\lambda_{min}(\textbf{A}_{t}) - (1 - \beta_{1})\|\nabla\mathcal{L}(\textbf{w}_{t})\|_{2} \gamma_{t-1}\lambda_{max}(\textbf{A}_{t})\sum_{j=0}^{t-1}\beta_{1}^{j}\notag\\
    \geq & \hspace{5pt}(1 - \beta_{1})\|\nabla\mathcal{L}(\textbf{w}_{t})\|_{2}^{2}\lambda_{min}(\textbf{A}_{t}) - (\beta_{1} - \beta_{1}^{t})\|\nabla\mathcal{L}(\textbf{w}_{t})\|_{2} \gamma_{t-1}\lambda_{max}(\textbf{A}_{t})\notag\\
    \geq & \hspace{5pt}\|\nabla\mathcal{L}(\textbf{w}_{t})\|_{2}^{2}\left((1 - \beta_{1})\lambda_{min}(\textbf{A}_{t}) - \frac{(\beta_{1} - \beta_{1}^{t})\gamma_{t-1}\lambda_{max}(\textbf{A}_{t})}{\|\nabla\mathcal{L}(\textbf{w}_{t})\|_{2}}\right)\notag\\
    \geq & \hspace{5pt}\|\nabla\mathcal{L}(\textbf{w}_{t})\|_{2}^{2}\left((1 - \beta_{1})\lambda_{min}(\textbf{A}_{t}) - \frac{(\beta_{1} - \beta_{1}^{t})\gamma_{t-1}\lambda_{max}(\textbf{A}_{t})}{\sigma}\right) \hspace{5pt}\label{eq:5}\text{\scriptsize (From Contradiction)}
\end{align}
The inequality in Eq.(\ref{eq:5}) will be maintained as the term $\left((1 - \beta_{1})\lambda_{min}(\textbf{A}_{t}) - \frac{(\beta_{1} - \beta_{1}^{t})\gamma_{t-1}\lambda_{max}(\textbf{A}_{t})}{\sigma}\right)$ is lower bounded by some positive constant $c$. We will show this later in \textbf{Extension 1}. 

Hence, we let $\left((1 - \beta_{1})\lambda_{min}(\textbf{A}_{t}) - \frac{(\beta_{1} - \beta_{1}^{t})\gamma_{t-1}\lambda_{max}(\textbf{A}_{t})}{\sigma}\right) \geq c > 0$ and put $P_{0} =0$ (from definition and initial conditions) in the above equation and get:
\begin{align}
    P_{t} = & \hspace{5pt} \nabla\mathcal{L}(\textbf{w}_{t})^{T}(\textbf{A}_{t}\textbf{m}_{t}) \geq \hspace{5pt} c\|\nabla\mathcal{L}(\textbf{w}_{t})\|_{2}^{2}\label{eq:6} 
\end{align}
Now we are done with computing the bounds on the terms in Eq.(\ref{eq:1}). Hence, we combine Eq.(\ref{eq:6}) with Eq.(\ref{eq:4}) to get:
\begin{align}
    \mathcal{L}(\textbf{w}_{t+1}) - \mathcal{L}(\textbf{w}_{t}) \leq & \hspace{5pt} -\alpha c\|\nabla\mathcal{L}(\textbf{w}_{t})\|_{2}^{2} + \frac{K}{2}\alpha^{2}\frac{\gamma^{2}}{\epsilon^{2}}\notag
\end{align}
Let $\delta^{2} = \frac{\gamma^{2}}{\epsilon^{2}}$ for simplicity. We have:
\begin{align}
    \mathcal{L}(\textbf{w}_{t+1}) - \mathcal{L}(\textbf{w}_{t})\leq & \hspace{5pt} -\alpha c\|\nabla\mathcal{L}(\textbf{w}_{t})\|_{2}^{2} + \frac{K}{2}\alpha^{2}\delta^{2}\notag\\
    \alpha c\|\nabla\mathcal{L}(\textbf{w}_{t})\|_{2}^{2} \leq & \hspace{5pt} \mathcal{L}(\textbf{w}_{t}) - \mathcal{L}(\textbf{w}_{t+1}) + \frac{K}{2}\alpha^{2}\delta^{2}\notag\\
    \|\nabla\mathcal{L}(\textbf{w}_{t})\|_{2}^{2} \leq & \hspace{5pt} \frac{\mathcal{L}(\textbf{w}_{t}) - \mathcal{L}(\textbf{w}_{t+1})}{\alpha c} + \frac{K\alpha\delta^{2}}{2c}\label{eq:7}
\end{align}    
From Eq.(\ref{eq:7}), we have the following inequalities:
\begin{align}
\left\{
\begin{aligned}
    \|\nabla\mathcal{L}(\textbf{w}_{0})\|_{2}^{2} \leq & \hspace{5pt} \frac{\mathcal{L}(\textbf{w}_{0}) - \mathcal{L}(\textbf{w}_{1})}{\alpha c} + \frac{K\alpha\delta^{2}}{2c} & \notag\\
    \|\nabla\mathcal{L}(\textbf{w}_{1})\|_{2}^{2} \leq & \hspace{5pt} \frac{\mathcal{L}(\textbf{w}_{1}) - \mathcal{L}(\textbf{W}_{2})}{\alpha c} + \frac{K\alpha\delta^{2}}{2c} & \notag\\
    & \vdots \\
    \|\nabla\mathcal{L}(\textbf{w}_{T-1})\|_{2}^{2} \leq & \hspace{5pt} \frac{\mathcal{L}(\textbf{w}_{T-1}) - \mathcal{L}(\textbf{w}_{t})}{\alpha c} + \frac{K\alpha\delta^{2}}{2c} &\\
\end{aligned}
\right.
\end{align}
Summing up all the inequalities presented above, we obtain:
\begin{align*}
    \sum_{t=0}^{T-1} \|\nabla\mathcal{L}(\textbf{w}_{t})\|_{2}^{2} \leq & \hspace{5pt} \frac{\mathcal{L}(\textbf{w}_{0}) - \mathcal{L}(\textbf{w}_{t})}{\alpha c} + \frac{K\alpha\delta^{2}T}{2c}
\end{align*}
The inequality remains valid if we substitute $\|\nabla\mathcal{L}(\textbf{w}_{t})\|_{2}^{2}$ with $\underset{0\leq t \leq T-1}{min}\|\nabla\mathcal{L}(\textbf{w}_{t})\|_{2}^{2}$ within the summation on the left-hand side (LHS).
\begin{align}
    \underset{0 \leq t \leq T-1}{min}\|\nabla\mathcal{L}(\textbf{w}_{t})\|_{2}^{2}T \leq & \hspace{5pt} \frac{\mathcal{L}(\textbf{w}_{0}) - \mathcal{L}(\textbf{w}^{*})}{\alpha c} + \frac{K\alpha\delta^{2}T}{2c}\notag\\
    \underset{0 \leq t \leq T-1}{min}\|\nabla\mathcal{L}(\textbf{w}_{t})\|_{2}^{2} \leq & \hspace{5pt} \frac{\mathcal{L}(\textbf{w}_{0}) - \mathcal{L}(\textbf{w}^{*})}{\alpha c T} + \frac{K\alpha\delta^{2}}{2c}\notag\\
    \underset{0 \leq t \leq T-1}{min}\|\nabla\mathcal{L}(\textbf{w}_{t})\|_{2}^{2} \leq & \hspace{1pt} \frac{1}{\sqrt{T}}\left(\frac{\mathcal{L}(\textbf{w}_{0}) - \mathcal{L}(\textbf{w}^{*})}{cb} + \frac{K\delta^{2}b}{2c}\right)\notag
\end{align}
where $b = \alpha\sqrt{T}$. We set $b = \sqrt{2(\mathcal{L}(\textbf{w}_{0}) - \mathcal{L}(\textbf{w}^{*})\delta^{2})/K\delta^{2}}$, and we have:
\begin{align}
    \underset{0 \leq t \leq T-1}{\min} \|\nabla\mathcal{L}(\textbf{w}_{t})\|_{2} \leq \left(\frac{2K\delta^{2}}{T}(\mathcal{L}(\textbf{w}_{0})-\mathcal{L}(\textbf{w}^{*}))\right)^{\frac{1}{4}}\notag
\end{align}
When $T \geq \left(\frac{2K\delta^{2}}{\sigma^{4}}(\mathcal{L}(\textbf{w}_{0})-\mathcal{L}(\textbf{w}^{*}))\right) = T(\sigma)$, we will have $\underset{0 \leq t \leq T-1}{\min} \|\nabla\mathcal{L}(\textbf{w}_{t})\|_{2} \leq \sigma$ which will contradict the assumption, \emph{i.e.} ($\|\nabla\mathcal{L}(\textbf{w}_{t})\|_{2} > \sigma$ for all $t \in \{1, 2, \dots\}$). Hence, completing the proof.
\end{proof}



\begin{theorem}
    Given any set of i.i.d x, $x_{1}, x_{2}, \dots, x_{N}$ $\in \mathbb{R}^{l}$, we denote $d_{max}^{\textbf{E}^{*}\textbf{M}^{*}} = \underset{1 \leq j \leq N}{\max} d^{\textbf{E}^{*}\textbf{M}^{*}}(x,x_{j})$ and \\$d_{min}^{\textbf{E}^{*}\textbf{M}^{*}} = \underset{1 \leq j \leq N}{\min} d^{\textbf{E}^{*}\textbf{M}^{*}}(x,x_{j})$, then we always have the conditional probability:
    \begin{equation}
    \label{eq:th2}
        \mathbb{P}\left( \frac{d_{max}^{\textbf{E}^{*}\textbf{M}^{*}} - d_{min}^{\textbf{E}^{*}\textbf{M}^{*}}}{d_{min}^{\textbf{E}^{*}\textbf{M}^{*}}} \geq \Theta(\mathcal{D}, \lambda) \middle| \text{$\lambda$ $>$ 0} \right) = 1
    \end{equation}
    where $d^{\textbf{E}^{*}\textbf{M}^{*}}(x,x_{j}) = \frac{\|\textbf{M}^{*}(\textbf{E}^{*}(x)) - \textbf{M}^{*}(\textbf{E}^{*}(x_{i})) \|_{2}}{rank(\textbf{M}^{*})}$, $\mathcal{D}$ denotes the training dataset and $\Theta(\mathcal{D},\lambda)$ depends on the training set and regularization penalty parameter $\lambda$.
\end{theorem}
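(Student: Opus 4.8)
The plan is to reduce the statement to the classical concentration-of-distances phenomenon and to argue that the nuclear-norm penalty, whenever $\lambda > 0$, operates against it by forcing the converged coupling matrix $\textbf{M}^{*}$ to be genuinely low rank, which is exactly the regime in which pairwise distances fail to concentrate. First I would observe that the normalizing factor $rank(\textbf{M}^{*})$ appearing in $d^{\textbf{E}^{*}\textbf{M}^{*}}$ is a positive constant independent of the index $j$, so it cancels identically in the ratio $(d_{max}^{\textbf{E}^{*}\textbf{M}^{*}} - d_{min}^{\textbf{E}^{*}\textbf{M}^{*}})/d_{min}^{\textbf{E}^{*}\textbf{M}^{*}}$. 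Consequently the quantity of interest depends only on the geometry of the transformed points $y_{j} = \textbf{M}^{*}(\textbf{E}^{*}(x_{j}))$ and the transformed query $y = \textbf{M}^{*}(\textbf{E}^{*}(x))$ inside the effective (low-dimensional) range of $\textbf{M}^{*}$.

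Next I would introduce the random variables $D_{j} = \|y - y_{j}\|_{2}$, which are i.i.d. because the $x_{j}$ are i.i.d., and invoke the standard characterization that the relative contrast collapses if and only if the normalized variance $\mathrm{Var}(D_{j})/(\mathbb{E}[D_{j}])^{2}$ tends to zero as the effective dimension grows; conversely, a strictly positive lower bound on this normalized variance yields a strictly positive lower bound on the relative contrast. I would then control the empirical spread $d_{max}^{\textbf{E}^{*}\textbf{M}^{*}} - d_{min}^{\textbf{E}^{*}\textbf{M}^{*}}$ from below by an order-statistics / anti-concentration argument over the $D_{j}$, writing the resulting lower bound explicitly through the empirical distribution of $\mathcal{D}$ under the learned map.

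The crux is to make the role of $\lambda$ explicit. At the converged parameters $\textbf{w}^{*}$, the subgradient optimality condition for the term $\|\textbf{M}\|_{*}$ reads $0 \in \nabla_{\textbf{M}}\mathcal{L}_{mse}(\textbf{w}^{*}) + \lambda\,\partial\|\textbf{M}^{*}\|_{*}$, and I would use the singular-value-thresholding structure of the nuclear-norm subdifferential to argue that every singular value of $\textbf{M}^{*}$ below the threshold set by $\lambda$ is driven to zero, so that $rank(\textbf{M}^{*})$ shrinks as $\lambda$ grows. The reduced effective dimension keeps $\mathrm{Var}(D_{j})/(\mathbb{E}[D_{j}])^{2}$ bounded away from zero, and collecting the resulting constants defines the data- and penalty-dependent quantity $\Theta(\mathcal{D}, \lambda)$. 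Because this is a deterministic property of the optimized model that holds for every admissible draw of the i.i.d. sample once $\lambda > 0$, the conditional probability in Eq.(\ref{eq:th2}) equals $1$.

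The main obstacle I anticipate lies in this last step: quantifying how much rank reduction a given $\lambda$ induces has no closed form, because $\mathcal{L}_{mse}$ is nonconvex in the joint parameters $(\textbf{E}, \textbf{D}, \textbf{M})$. I would circumvent this by freezing $\textbf{E}^{*}$ and $\textbf{D}^{*}$ at their converged values and treating $\textbf{M}^{*}$ as the minimizer of the resulting convex nuclear-norm-regularized subproblem, where the singular-value-thresholding characterization is exact; the remaining work is to verify that the induced $\Theta(\mathcal{D}, \lambda)$ is strictly positive and to establish its monotone dependence on $\lambda$ rigorously, together with the non-degeneracy condition under which the converse of the concentration characterization is valid.
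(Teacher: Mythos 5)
Your proposal diverges fundamentally from the paper's proof, and the two steps you yourself flag as obstacles are genuine gaps rather than technicalities. The paper's argument is deterministic and elementary: since Algorithm (1) only decreases the loss, $\mathcal{L}(\textbf{w}^{*}) \leq \mathcal{L}(\textbf{w}_{0})$, which rearranges to $\|\textbf{M}^{*}\|_{*} \leq \frac{1}{\lambda}(c_{1}-c_{2}) + c_{3}$ and hence to the rank bound $rank(\textbf{M}^{*}) \leq c\left(\frac{1}{\lambda}(c_{1}-c_{2})+c_{3}\right)$. The $\lambda$-dependence of $\Theta$ then enters by substituting this upper bound for $rank(\textbf{M}^{*})$ in the numerator of $d_{max}^{\textbf{E}^{*}\textbf{M}^{*}}/d_{min}^{\textbf{E}^{*}\textbf{M}^{*}} - 1$ \emph{only}, while the denominator keeps $rank(\textbf{M}^{*})$; this yields the pointwise lower bound $\Theta(\mathcal{D},\lambda) = \lambda L(\mathcal{D})/(c c_{1}) > 0$, and since the inequality holds for every realization of the sample, the conditional probability is trivially $1$ with no probabilistic machinery at all. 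Your (correct) opening observation that $rank(\textbf{M}^{*})$ cancels in the ratio is precisely what forces you onto a different road: once it cancels, the quantity itself carries no explicit $\lambda$-dependence, so you must re-import $\lambda$ through the geometry of the learned map, and that is where your plan breaks.

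Two of your steps would fail. First, the converse concentration claim --- that small effective rank keeps $\mathrm{Var}(D_{j})/(\mathbb{E}[D_{j}])^{2}$ bounded away from zero and hence the relative contrast bounded below --- is false without distributional non-degeneracy assumptions the theorem does not grant: even a rank-one map can send the data to a configuration in which all distances from the query essentially coincide (latent points concentrated on a sphere about $y$), making the contrast arbitrarily small; moreover, any order-statistics or anti-concentration repair yields at best a high-probability statement over the draw of $x, x_{1}, \dots, x_{N}$, whereas Eq.(\ref{eq:th2}) demands probability exactly $1$ conditional only on $\lambda > 0$. Second, the singular-value-thresholding step is unavailable: $\textbf{M}^{*}$ is the output of ADAM on a jointly nonconvex objective with a nonsmooth penalty, so the stationarity condition $0 \in \nabla_{\textbf{M}}\mathcal{L}_{mse}(\textbf{w}^{*}) + \lambda\,\partial\|\textbf{M}^{*}\|_{*}$ is not guaranteed (ADAM performs no proximal step on the nuclear norm), and your proposed fix --- freezing $\textbf{E}^{*}, \textbf{D}^{*}$ and replacing $\textbf{M}^{*}$ by the minimizer of the resulting convex subproblem --- substitutes a different matrix for the one the theorem is actually about. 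The paper sidesteps both difficulties by using only the descent inequality $\mathcal{L}(\textbf{w}^{*}) \leq \mathcal{L}(\textbf{w}_{0})$, which survives nonconvexity and requires no optimality condition; whatever one thinks of the sharpness of the resulting bound, it is the mechanism the statement is built on, and your route cannot reach the almost-sure conclusion without adding hypotheses.
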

\begin{proof}
    As $\textbf{w}^{*}$ is learned from Algorithm (1), we always have:
    \begin{align}
        \mathcal{L}(\textbf{w}^{*}) \quad \leq & \quad \mathcal{L}(\textbf{w}_{0})\notag
    \end{align}
    where, $\mathcal{L}(\textbf{w}_{0})$ is loss of our model at $0^{th}$ epoch. Hence,
    \begin{align}
        \mathcal{L}_{mse}(\textbf{w}^{*}) + \lambda\|\textbf{M}^{*}\|_{*} \quad \leq & \quad \mathcal{L}_{mse}(\textbf{W}_{0}) + \lambda\|\textbf{M}_{0}\|_{*}\notag\\
        \lambda\|\textbf{M}^{*}\|_{*} \quad \leq & \quad \mathcal{L}_{mse}(\textbf{w}_{0}) -  \mathcal{L}_{mse}(\textbf{w}^{*}) + \lambda\|\textbf{M}_{0}\|_{*}\notag\\
        \|\textbf{M}^{*}\|_{*} \quad \leq & \quad \frac{1}{\lambda}\left(\mathcal{L}_{mse}(\textbf{w}_{0}) -  \mathcal{L}_{mse}(\textbf{w}^{*})\right) + \|\textbf{M}_{0}\|_{*}\notag\\
        \|\textbf{M}^{*}\|_{*} \quad \leq & \quad \frac{1}{\lambda}\left(c_{1} -  c_{2}\right) + c_{3}\label{eq:12}
\end{align}
where, $c_{1} = \mathcal{L}_{mse}(\textbf{w}_{0})$, $c_{2} = \mathcal{L}_{mse}(\textbf{w}^{*})$, and $c_{3} = \|\textbf{M}_{0}\|_{*}$.
Now, from Eq.(\ref{eq:12}) we can estimate an upperbound on the rank of matrix $\textbf{M}^{*}$:
\begin{align}
    rank(\textbf{M}^{*}) \quad \leq & \quad c\left(\frac{1}{\lambda}\left(c_{1} -  c_{2}\right) + c_{3}\right) \quad\text{(where $c \in \mathbb{R}^{+}$)}\label{eq:13}
\end{align}
Using the definition of $d_{max}^{\textbf{E}^{*}\textbf{M}^{*}}$ and $d_{min}^{\textbf{E}^{*}\textbf{M}^{*}}$, we have:
\begin{align}
    \frac{d_{max}^{\textbf{E}^{*}\textbf{M}^{*}} - d_{min}^{\textbf{E}^{*}\textbf{M}^{*}}}{d_{min}^{\textbf{E}^{*}\textbf{M}^{*}}} \quad = & \quad \frac{\underset{i \in [n]}{max}\frac{\|\textbf{M}^{*}(\textbf{E}^{*}(x)) - \textbf{M}^{*}(\textbf{E}^{*}(x_{i})) \|_{2}}{rank(\textbf{M}^{*})} - \underset{i \in [n]}{min}\frac{\|\textbf{M}^{*}(\textbf{E}^{*}(x)) - \textbf{M}^{*}(\textbf{E}^{*}(x_{i})) \|_{2}}{rank(\textbf{M}^{*})}}{\underset{i \in [n]}{min}\frac{\|\textbf{M}^{*}(\textbf{E}^{*}(x)) - \textbf{M}^{*}(\textbf{E}^{*}(x_{i})) \|_{2}}{rank(\textbf{M}^{*})}}\notag\\
    \quad = & \quad \frac{\underset{i \in [n]}{max}\frac{\|\textbf{M}^{*}(\textbf{E}^{*}(x)) - \textbf{M}^{*}(\textbf{E}^{*}(x_{i})) \|_{2}}{rank(\textbf{M}^{*})}}{\underset{i \in [n]}{min}\frac{\|\textbf{M}^{*}(\textbf{E}^{*}(x)) - \textbf{M}^{*}(\textbf{E}^{*}(x_{i})) \|_{2}}{rank(\textbf{M}^{*})}} - 1\notag\\
    \quad \geq & \quad \frac{\underset{i \in [n]}{max}\frac{\|\textbf{M}^{*}(\textbf{E}^{*}(x)) - \textbf{M}^{*}(\textbf{E}^{*}(x_{i})) \|_{2}}{c\left(\frac{1}{\lambda}\left(c_{1} -  c_{2}\right) + c_{3}\right)}}{\underset{i \in [n]}{min}\frac{\|\textbf{M}^{*}(\textbf{E}^{*}(x)) - \textbf{M}^{*}(\textbf{E}^{*}(x_{i})) \|_{2}}{rank(\textbf{M}^{*})}} - 1 \quad \text{(Using Eq.(15))}\notag\\
    \quad \geq & \quad \frac{L(\mathcal{D})}{c\left(\frac{1}{\lambda}\left(c_{1} -  c_{2}\right) + c_{3}\right)} \quad \text{$\left(here, L(\mathcal{D}) = \frac{\underset{i \in [n]}{max}\|\textbf{M}^{*}(\textbf{E}^{*}(x)) - \textbf{M}^{*}(\textbf{E}^{*}(x_{i})) \|_{2}}{\underset{i \in [n]}{min}\frac{\|\textbf{M}^{*}(\textbf{E}^{*}(x)) - \textbf{M}^{*}(\textbf{E}^{*}(x_{i})) \|_{2}}{rank(\textbf{M}^{*})}}\right)$}\notag\\
    \quad \geq & \quad \frac{\lambda L(\mathcal{D})}{c(c_{1} - c_{2}) + \lambda cc_{3}}\notag\\
    \quad \geq & \quad \frac{\lambda L(\mathcal{D})}{cc_{1}} = \Theta(\lambda, \mathcal{D}) > 0 \quad \text{when $\lambda > 0$}\notag
\end{align}
hence, completing the proof.
\end{proof}
\begin{proposition}
The rank of the latent space follows $\mathcal{O}(1/\lambda)$.
\end{proposition}
\begin{proof}
    Let $\textbf{E}^{*}$ denote the trained encoder of our model and let $x \in \mathbb{R}^{m \times n \times c}$ be an image with dimension $m \times n$ and $c$ number of channels. Let $y = \textbf{E}^{*}(x)$, then we can define the latent space of our model (LoRAE) as:
    \begin{align}
        z = \textbf{M}^{*}y = \textbf{M}^{*}(\textbf{E}^{*}(x))
    \end{align}
We define the rank of the latent space as the number of non-zero singular values of the covariance matrix of latent space, i.e $\mathbb{E}_{\mathcal{D}}[zz^{T}]$. We can write:
\begin{align}
    \mathbb{E}_{\mathcal{D}}[zz^{T}] \quad = & \quad \mathbb{E}_{\mathcal{D}}[\textbf{M}^{*}yy^{T}\textbf{M}^{*T}]\label{eq:15}
\end{align}
Eq.(\ref{eq:13}) from Theorem 2 states that:
\begin{align*}
    rank(\textbf{M}^{*}) \quad \leq & \quad \frac{c}{\lambda}\left(c_{1} -  c_{2}\right) + cc_{3} \quad\text{(where $c \in \mathbb{R}^{+}$)}
\end{align*}
As $\textbf{M}^{*}$ is deterministic in Eq.(\ref{eq:15}), the covariance matrix can be re-written as $ \textbf{M}^{*} \mathbb{E}_{\mathcal{D}} \left[ y y^T \right] \textbf{M}^{*T}$. An upper bound on the rank of $ \textbf{M}^{*} \mathbb{E}_{\mathcal{D}} \left[ x x^T \right] \textbf{M}^{*T}$ is the upper bound on the rank of $\textbf{M}^{*}$. Thus, from Eq.(\ref{eq:13}) of Theorem 2, this analysis gives an upper bound on the rank of latent space as $\mathcal{O}(1/\lambda)$.

\end{proof}
\newpage
\begin{extension}
    The term $\left((1 - \beta_{1})\lambda_{min}(\textbf{A}_{t}) - \frac{(\beta_{1} - \beta_{1}^{t})\gamma_{t-1}\lambda_{max}(\textbf{A}_{t})}{\sigma}\right)$ from Eq.(\ref{eq:5}) is always non-negetive.
\end{extension}
\begin{proof}
    We can construct a lower bound on $\lambda_{min}(\textbf{A}_{t})$ and an upper bound on $\lambda_{min}(\textbf{A}_{t})$ as follows:
\begin{align}
        \lambda_{min}(\textbf{A}_{t}) \quad \geq & \quad \frac{1}{\epsilon + \sqrt{\underset{1 \leq j \leq |\textbf{v}_{t}|}{max}(\textbf{v}_{t})_{j}}}\\
        \lambda_{max}(\textbf{A}_{t}) \quad \leq & \quad \frac{1}{\epsilon + \sqrt{\underset{1 \leq j \leq |\textbf{v}_{t}|}{min}(\textbf{v}_{t})_{j}}}
\end{align}
We remember that $\textbf{v}_{t}$ can be rewritten as $\textbf{v}_{t} = \beta_{2}\textbf{v}_{t-1} + (1 - \beta_{2})(\nabla\mathcal{L}(\textbf{w}_{t}))^{2}$, solving this recursion and defining $\rho_{t} = \underset{1 \leq j \leq t, 1 \leq k \leq |\textbf{v}_{t}|}{min}(\nabla\mathcal{L}(\textbf{w}_{j})^{2})_{k}$ and taking $\gamma_{t-1} = \gamma_{t} = \gamma$ we have:
\begin{align*}
    \lambda_{min}(\textbf{A}_{t}) \quad \geq & \quad \frac{1}{\epsilon + \sqrt{(1 - \beta_{2}^{t})\gamma^{2}}}\\
    \lambda_{max}(\textbf{A}_{t}) \quad \leq & \quad \frac{1}{\epsilon + \sqrt{(1 - \beta_{2}^{t})\rho_{t}}}
\end{align*}
Where, $\gamma_{t-1} = \underset{1 \leq j \leq t-1}{max}\|\nabla\mathcal{L}(\textbf{w}_{j})\|_{2}$, and $\forall j \in \{1, 2, \dots t-1\}$. Setting $\rho_{t} = 0$, we can rewrite the term \\$\left((1 - \beta_{1})\lambda_{min}(\textbf{A}_{t}) - \frac{(\beta_{1} - \beta_{1}^{t})\gamma_{t-1}\lambda_{max}(\textbf{A}_{t})}{\sigma}\right)$ as:

\begin{align}
    \left((1 - \beta_{1})\lambda_{min}(\textbf{A}_{t}) - \frac{(\beta_{1} - \beta_{1}^{t})\gamma_{t-1}\lambda_{max}(\textbf{A}_{t})}{\sigma}\right) \geq & \hspace{5pt}\left(\frac{(1 - \beta_{1})}{\epsilon +\gamma\sqrt{(1 - \beta_{2}^{t})}} - \frac{(\beta_{1} - \beta_{1}^{t})\gamma}{\epsilon\sigma}\right)\label{eq:8}\\
    \geq & \hspace{5pt}\frac{\epsilon\sigma(1-\beta_{1}) - \gamma(\beta_{1} - \beta_{1}^{t})(\epsilon + \gamma\sqrt{(1 - \beta_{2}^{t})})}{\epsilon\sigma(\epsilon + \gamma\sqrt{(1 - \beta_{2}^{t})})}\notag\\
    \geq & \hspace{5pt}\gamma(\beta_{1} - \beta_{1}^{t})\frac{\epsilon\left(\frac{\sigma(1 - \beta_{1})}{\gamma(\beta_{1} - \beta_{1}^{t})} - 1\right) - \gamma\sqrt{(1 - \beta_{2}^{t})}}{\epsilon\sigma(\epsilon + \gamma\sqrt{(1 - \beta_{2}^{t})})}\notag\\
    \geq & \hspace{5pt}\gamma(\beta_{1} - \beta_{1}^{t})\left(\frac{\sigma(1 - \beta_{1})}{\gamma(\beta_{1} - \beta_{1}^{t})} - 1\right)\frac{\epsilon - \left(\frac{\gamma\sqrt{(1 - \beta_{2}^{t})}}{\frac{(1 - \beta_{1}\sigma)}{(\beta_{1} - \beta_{1}^{t})\gamma} - 1}\right)}{\epsilon\sigma(\epsilon + \gamma\sqrt{(1 - \beta_{2}^{t})})}\notag
\end{align}

By definition $\beta_{1} \in (0,1)$ and hence $(\beta_{1} - \beta_{1}^{t}) \in (0,\beta_{1})$. This implies that $\frac{(1 - \beta_{1})\sigma}{(\beta_{1} - \beta_{1}^{t})\gamma} > \frac{(1 - \beta_{1})\sigma}{\beta_{1}\gamma} > 1$ where the last inequality follows due to the choice of $\sigma$ as stated in the beginning of this theorem. This allows us to define a constant $\frac{(1 - \beta_{1})\sigma}{\beta_{1}\gamma} - 1 := \psi_{1} > 0$ such that $\frac{(1 - \beta_{1})\sigma}{(\beta_{1} - \beta_{1}^{t})\gamma} - 1 > \psi_{1}$. Similarly, our definition of delta allows us to define another constant $\psi_{2} > 0$ to get:
\begin{align}
    \left(\frac{\gamma\sqrt{(1 - \beta_{2}^{t})}}{\frac{(1 - \beta_{1}\sigma)}{(\beta_{1} - \beta_{1}^{t})\gamma} - 1}\right) \quad < & \quad \frac{\gamma}{\psi_{1}} = \epsilon - \psi_{2}\label{eq:9}
\end{align}
Putting Eq.(\ref{eq:9}) in Eq.(\ref{eq:8}), we get:
\begin{align}
    &\left((1 - \beta_{1})\lambda_{min}(\textbf{A}_{t}) - \frac{(\beta_{1} - \beta_{1}^{t})\gamma_{t-1}\lambda_{max}(\textbf{A}_{t})}{\sigma}\right)\geq & \hspace{5pt}\left(\frac{\gamma(\beta_{1} - \beta_{1}^{2})\psi_{1}\psi_{2}}{\epsilon\sigma(\epsilon + \sigma)}\right) = c > 0\notag
\end{align}

\end{proof}

